\DeclareMathOperator{\rank}{rank}
\newcommand{\myparagraph}[1]{\vspace{3pt}\noindent{\bf #1}}
\definecolor{aqua}{rgb}{0.0, 0.48, 0.65}
\newcommand{\ournet}{AtlasNet}
\newcommand{\patches}{learnable parametrizations}
\ifcvprfinal\pagestyle{empty}\fi
\begin{document}

\title{AtlasNet: A Papier-M\^ach\'e Approach to Learning 3D Surface Generation}

\author{
Thibault Groueix$^{1\thanks{Work done at Adobe Research during TG's summer internship}}$, Matthew Fisher$^2$, Vladimir G. Kim$^2$, Bryan C. Russell$^2$, Mathieu Aubry$^1$\\
$^1$LIGM (UMR 8049), \'Ecole des Ponts, UPE, $^2$Adobe Research\\
{\tt\small \url{http://imagine.enpc.fr/~groueixt/atlasnet/}}\\
}

\maketitle
\begin{abstract}
\vspace{-3mm}

We introduce a method for learning to generate the surface of 3D shapes. 
Our approach represents a 3D shape as a collection of parametric surface elements and, in contrast to methods generating voxel grids or point clouds, naturally infers a surface representation of the shape.
   Beyond its novelty, our new shape generation framework, \ournet{}, comes with significant advantages, such as improved precision and generalization capabilities, and the possibility to generate a shape of arbitrary resolution without memory issues. 
We demonstrate these benefits and compare to strong baselines on the ShapeNet benchmark for two applications: (i) auto-encoding shapes, and (ii) single-view reconstruction from a still image.
   We also provide results showing its potential for other applications, such as morphing, parametrization, super-resolution, matching, and co-segmentation.
\end{abstract}
\vspace{-6mm}

\section{Introduction}
\vspace{-2mm}
Significant progress has been made on learning good representations for images, allowing impressive applications in image generation~\cite{Isola:2017,Zhu:2017:cylcegan}. However, learning a representation for generating high-resolution 3D shapes remains an open challenge. Representing a shape as a volumetric function~\cite{choy20163d,Hane:2017,TDB17b} only provides voxel-scale sampling of the underlying smooth and continuous surface. In contrast, a point cloud~\cite{qi2016pointnet,Qi:2017:nips} provides a representation for generating on-surface details~\cite{Fan:2017:cvpr}, efficiently leveraging sparsity of the data. However, points do not directly represent neighborhood information, making it difficult to approximate the smooth low-dimensional manifold structure with high fidelity. 

To remedy shortcomings of these representations, surfaces are a popular choice in geometric modeling. A surface is commonly modeled by a  polygonal mesh: a set of vertices, and a list of triangular or quad primitives composed of these vertices, providing piecewise planar approximation to the smooth manifold. Each mesh vertex contains a 3D (XYZ) coordinate, and, frequently, a 2D (UV) embedding to a plane. The UV parameterization of the surface provides an effective way to store and sample functions on surfaces, such as normals, additional geometric details, textures, and other reflective properties such as BRDF and ambient occlusion. 
One can imagine converting point clouds or volumetric functions produced with existing learned generative models as a simple post-process. %
However, this requires solving two fundamental, difficult, and long-standing challenges in geometry processing: global surface parameterization and meshing.

\begin{figure*}[t!]
\centering
\begin{subfigure}[b]{0.32\linewidth}
 \includegraphics[width=\linewidth]{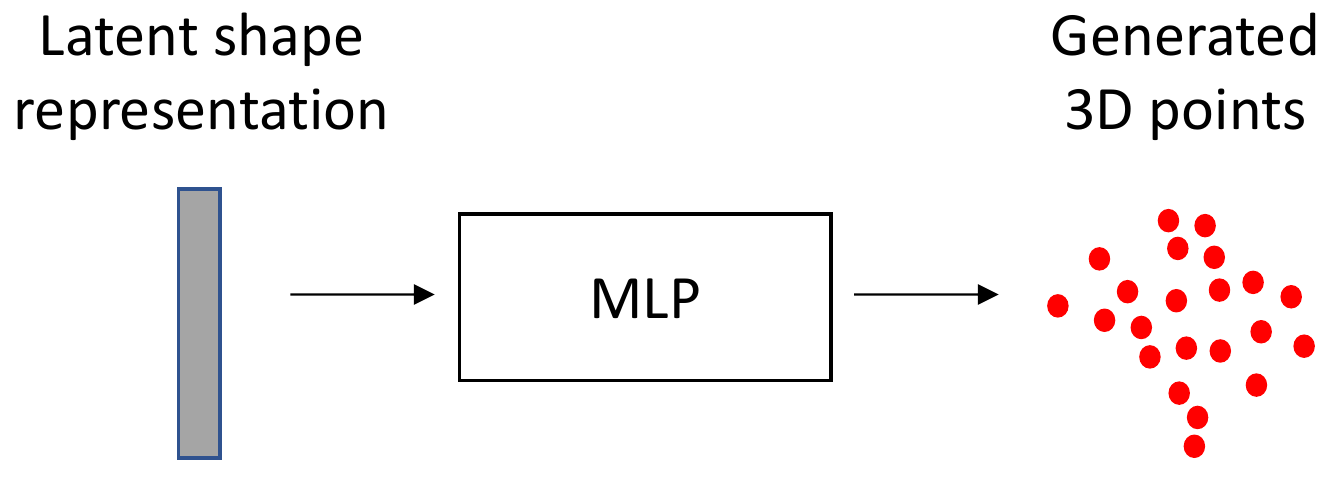}
\caption{Points baseline. \label{fig:baseline}}
\end{subfigure}
~\vline~
\begin{subfigure}[b]{0.32\linewidth}
 \includegraphics[width=\linewidth]{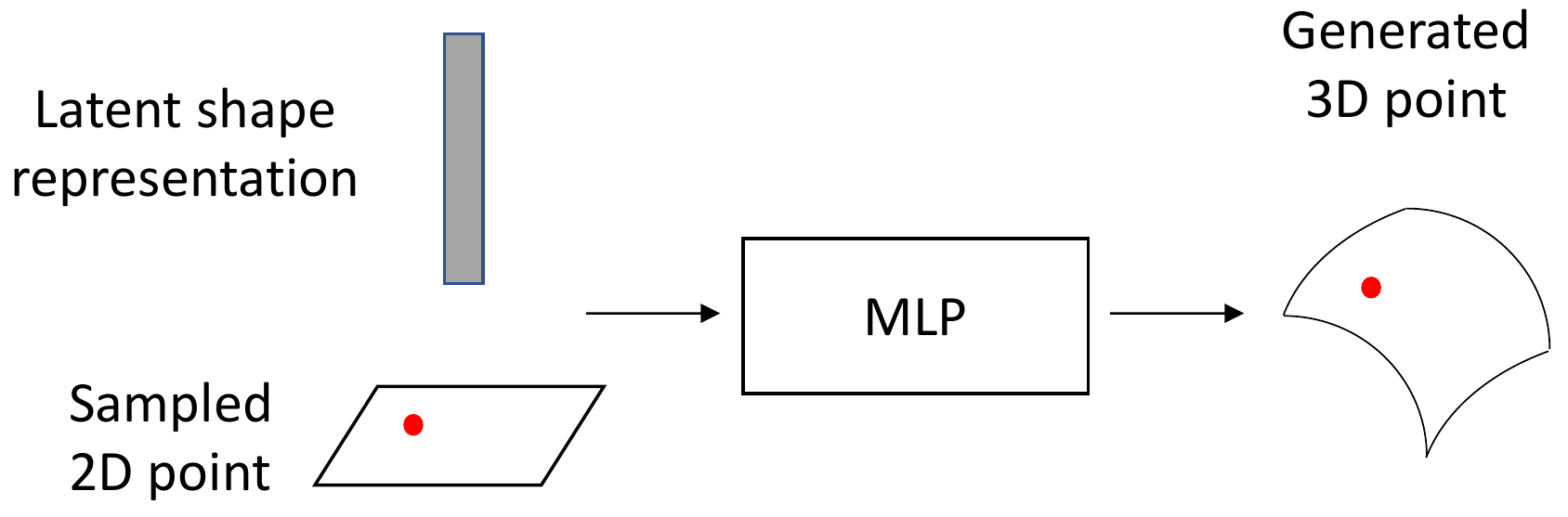}
\caption{Our approach with one patch. \label{fig:ours1}}
\end{subfigure}
~\vline~
\begin{subfigure}[b]{0.32\linewidth}
 \includegraphics[width=\linewidth]{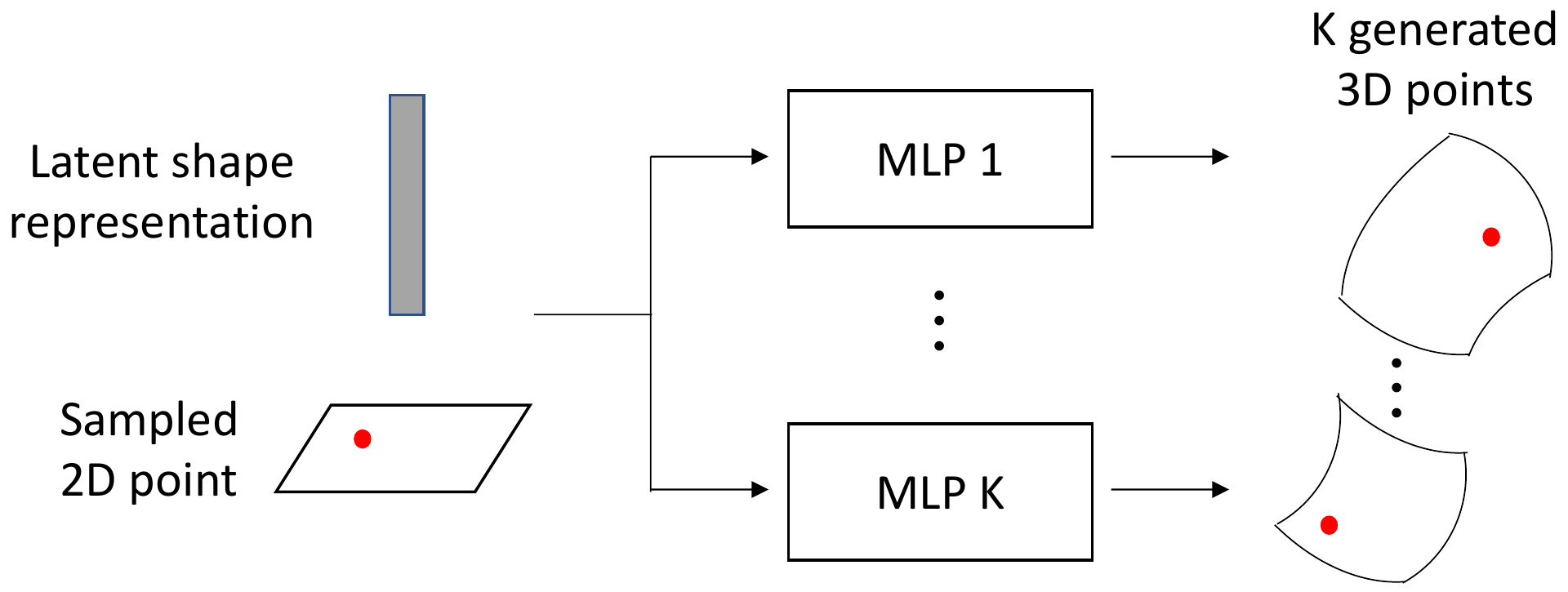}
\caption{Our approach with K patches. \label{fig:oursK}}
\end{subfigure}
\caption{{\bf Shape generation approaches.} All methods take as input a latent shape representation (that can be learned jointly with a reconstruction objective) and generate as output a set of points. (a) A baseline deep architecture would simply decode this latent representation into a set of points of a given size. (b) Our approach takes as additional input a 2D point sampled uniformly in the unit square and uses it to generate a single point on the surface. Our output is thus the continuous image of a planar surface. In particular, we can easily infer a mesh of arbitrary resolution on the generated surface elements. (c) This strategy can be repeated multiple times to represent a 3D shape as the union of several surface elements.
}
  \label{fig:overview}
  \vspace*{-4mm}
\end{figure*}

In this paper we explore learning the surface representation directly.  Inspired by the formal definition of a surface as a topological space that locally resembles the Euclidean plane, we seek to approximate the target surface locally by mapping a set of squares to the surface of the 3D shape. The use of multiple such squares allows us to model complex surfaces with non-disk topology. Our representation of a shape is thus extremely similar to an atlas, as we will discuss in Section \ref{sec:theory}. The key strength of our method is that it jointly learns a parameterization and an embedding of a shape. This helps in two directions. First, by ensuring that our 3D points come from 2D squares we favor learning a continuous and smooth 2-manifold structure. Second, by generating a UV parameterization for each 3D point, we generate a global surface parameterization, which is key to many applications such as texture mapping and surface meshing. Indeed, to generate the mesh, we simply transfer a regular mesh from our 2D squares to the 3D surface, and to generate a regular texture atlas, we simply optimize the metric of the square to become as-isometric-as-possible to the corresponding 3D shape (Fig.~\ref{fig:teaser_fig}). 

Since our work deforms primitive surface elements into a 3D shape, it can be seen as bridging the gap between the recent works that learn to represent 3D shapes as a set of simple primitives, with a fixed, low number of parameters \cite{tulsiani2016learning} and those that represent 3D shapes as an unstructured set of points~\cite{Fan:2017:cvpr}. It can also be interpreted as learning a factored representation of a surface, where a point on the shape is represented jointly by a vector encoding the shape structure and a vector encoding its position. Finally, it can be seen as an attempt to bring to 3D the power of convolutional approaches for generating 2D images~\cite{Isola:2017,Zhu:2017:cylcegan} by sharing the network parameters for parts of the surface.

\myparagraph{Our contributions.} 
In this paper:
\begin{itemize}
  \vspace*{-1mm}
    \item We propose a novel approach to 3D surface generation, dubbed {\it \ournet}, which is composed of a union of \patches. These \patches~transform a set of 2D squares to the surface, covering it in a way similar to placing strips of paper on a shape to form a papier-m\^ach\'e. The parameters of the transformations come both from the learned weights of a neural network and a learned representation of the shape. 
      \vspace*{-1mm}
    \item We show that the learned parametric transformation maps locally everywhere to a surface, naturally adapts to its underlying complexity, can be sampled at any desired resolution, and allows for the transfer of a tessellation or texture map to the generated surface. 
      \vspace*{-1mm}
    \item We demonstrate the advantages of our approach both qualitatively and quantitatively on high resolution surface generation from (potentially low resolution) point clouds and 2D images
      \vspace*{-1mm}
    \item We demonstrate the potential of our method for several applications, including shape interpolation, parameterization, and shape collections alignment. 
\end{itemize}
All the code is available at the project webpage\footnote{\url{https://github.com/ThibaultGROUEIX/AtlasNet}.}. 

\section{Related work}
\label{sec:related}

3D shape analysis and generation has a long history in computer vision. In this section, we only discuss the most directly related works for representation learning for 2-manifolds and 3D shape generation using deep networks.
\myparagraph{Learning representations for 2-manifolds.}
A polygon mesh is a widely-used representation for the 2-manifold surface of 3D shapes.
Establishing a connection between the surface of the 3D shape and a 2D domain, or surface parameterization, 
is a long-standing problem in geometry processing, 
with applications in texture mapping, re-meshing, and shape correspondence~\cite{Hormann08}.
Various related representations have been used for applying neural networks on surfaces. 
The geometry image representation~\cite{Gu02,Sander03} views 3D shapes as functions (e.g., vertex positions) 
embedded in a 2D domain, providing a natural input for 2D neural networks~\cite{Sinha2016}. 
Various other parameterization techniques, such as local polar 
coordinates~\cite{Masci15,Boscaini16} and global seamless maps~\cite{Maron17}
have been used for deep learning on 2-manifolds. 
Unlike these methods, we do not need our input data to be parameterized. Instead, we learn 
the parameterization directly from point clouds. Moreover, these methods assume that the training and testing data are 2-manifold meshes, and thus cannot easily be used for surface reconstructions from point clouds or images.

\myparagraph{Deep 3D shape generation.} 
Non-parametric approaches retrieve shapes from a large corpus~\cite{Bansal:2016,Li:2015:purification,Massa:2016}, but require having an exact instance in the corpus. 
One of the most popular shape representation for generation is the voxel representation. %
Methods for generating a voxel grid have been demonstrated with various inputs, namely one or several images~\cite{choy20163d,Girdhar16b}, full 3D objects in the form of voxel grids \cite{Girdhar16b,wu20153d}, and 3D objects with missing shape parts~\cite{wu20153d,HRSC:2017}. 
Such direct volumetric representation is costly in term of memory and is typically limited to coarser resolutions.
To overcome this, recent work has looked at a voxel representation of the surface of a shape via oct-trees~\cite{Hane:2017,Riegler2017THREEDV,TDB17b}. Recently, Li et al. also attempted to address this issue via learning to reason over hierarchical procedural shape structures and only generating voxel representations at the part level~\cite{li_sig17}.
As an alternative to volumetric representations, another line of work has learned to encode~\cite{qi2016pointnet,Qi:2017:nips} and decode~\cite{Fan:2017:cvpr} a 3D point representation of the surface of a shape. 
A limitation of the learned 3D point representation is there is no surface connectivity (e.g., triangular surface tessellation) embedded into the representation. %

Recently, Sinha et al.~\cite{Sinha2017} proposed to use a spherical parameterization of a single deformable mesh (if available) or of a few base shapes (composed with authalic projection of a sphere to a plane) to represent training shapes as parameterized meshes. They map vertex coordinates to the resulting UV space and use 2D neural networks for surface generation. This approach relies on consistent mapping to the UV space, and thus requires automatically estimating correspondences from training shapes to the base meshes (which gets increasingly hard for heterogeneous datasets). Surfaces generated with this method are also limited to the topology and tessellation of the base mesh. Overall, learning to generate surfaces of arbitrary topology from unstructured and heterogeneous input still poses a challenge.

\section{Locally parameterized surface generation}

In this section, we detail the theoretical motivation for our approach and present some theoretical guarantees.

\label{sec:theory}
We seek to learn to generate a surface of a 3D shape. 
A subset $\mathcal{S}$ of $\mathbb{R}^3$ is a \textit{2-manifold} if, for every point $\mathbf{p}\in\mathcal{S}$, there is an open set $U$ in $\mathbb{R}^2$ and an open set $W$ in $\mathbb{R}^3$ containing $\mathbf{p}$ such that $\mathcal{S}\cap W$ is homeomorphic to $U$. 
The set homeomorphism from $\mathcal{S}\cap W$ to $U$ is called a {\it chart}, and its inverse a {\it parameterization}.
A set of charts such that their images cover the 2-manifold is called an {\it atlas} of the 2-manifold. 
The ability to learn an atlas for a 2-manifold would allow a number of applications, such as transfer of a  tessellation to the 2-manifold for meshing and texture mapping (via texture atlases). In this paper, we use the word \textit{surface} in a slightly more generic sense than \textit{2-manifold}, allowing for self-intersections and disjoint sets.

We consider a local parameterization of a 2-manifold and explain how we learn to approximate it. More precisely, let us consider a 2-manifold $\mathcal{S}$, a point  $\mathbf{p}\in\mathcal{S}$ and a parameterization $\varphi$ of $\mathcal{S}$ in a local neighborhood of $\mathbf{p}$. We can assume that $\varphi$ is defined on the open unit square $]0,1[^2$ by first restricting $\varphi$ to an open neighborhood of $\varphi^{-1}(\mathbf{p})$ with disk topology where it is defined (which is possible because $\varphi$ is continuous) and then mapping this neighborhood to the unit square. %

We pose the problem of learning to generate the local 2-manifold previously defined as one of finding a parameterizations $\varphi_{\theta}(x)$ with parameters $\theta$ which map the open unit 2D square $]0,1[^2$ to a good approximation of the desired 2-manifold $\mathcal{S}_{\text{loc}}$. 
Specifically, calling $\mathcal{S}_{\theta}= \varphi_{\theta}(]0,1[^2)$,
we seek to find parameters $\theta$ minimizing the following objective function,
\begin{equation}
    \min_\theta \mathcal{L}\left(\mathcal{S}_{\theta}, \mathcal{S}_{\text{loc}}\right) + \lambda\mathcal{R}\left(\theta\right),
\end{equation}
where $\mathcal{L}$ is a loss over 2-manifolds, $\mathcal{R}$ is a regularization function over parameters $\theta$, and $\lambda$ is a scalar weight. 
In practice, instead of optimizing a loss over 2-manifolds $\mathcal{L}$, we optimize a loss over point sets sampled from these 2-manifolds such as Chamfer and Earth-Mover distance. 

One question is, how do we represent the functions $\varphi_{\theta}$? A good family of functions should (i) generate 2-manifolds and (ii) be able to produce a good approximation of the desired 2-manifolds $S_{\text{loc}}$. 
We show that multilayer perceptrons (MLPs) with rectified linear unit (ReLU) nonlinearities almost verify these properties, and thus are an adequate family of functions.
Since it is difficult to design a family of functions that always generate a 2-manifold, we relax this constraint and consider functions that locally generate a 2-manifold.

\newtheorem{proposition}{Proposition}

\begin{proposition}
Let $f$ be a multilayer perceptron with ReLU nonlinearities. There exists a finite set of polygons $P_i$, $i\in \lbrace 1,\ ...,N\rbrace$ such that on each $P_i$ $f$ is an affine function: $\forall x\in P_i, \ f(x)=A_i x+b$, where $A_i$ are $3\times2$ matrices. If for all $i$, $\rank(A_i)=2$, then for any point  $\mathbf{p}$ in the interior of one of the $P_is$ there exists a neighborhood $\mathcal{N}$ of $\mathbf{p}$ such that $f(\mathcal{N})$ is a 2-manifold. 
\end{proposition}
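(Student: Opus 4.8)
The plan is to combine two elementary facts: a ReLU MLP is a continuous, piecewise-affine map with polyhedral pieces, and an affine map of maximal rank is a homeomorphism onto its (flat) image. For the first claim of the proposition I would recall the standard structure theory of ReLU networks: each ReLU unit acts affinely on either side of the affine hypersurface on which its pre-activation vanishes, so propagating the sign patterns of all pre-activations through the layers refines the input domain into finitely many convex polyhedral cells, on each of which every pre-activation has constant sign and is affine, and hence on each of which the whole network $f$ is affine, of the form $f(x)=A_ix+b$ with $A_i\in\mathbb{R}^{3\times 2}$ (the codomain being $\mathbb{R}^3$). Because the input lives in $\mathbb{R}^2$, these cells are (convex) polygons $P_i$, finitely many of them --- the polygons in the statement.

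For the manifold claim I would argue purely locally on one affine piece. Fix $\mathbf{p}$ in the interior of some $P_i$; since $\operatorname{int}(P_i)$ is open in $\mathbb{R}^2$, I can pick an open disk $\mathcal{N}$ with $\mathbf{p}\in\mathcal{N}\subseteq\operatorname{int}(P_i)$, so $f(x)=A_ix+b$ for every $x\in\mathcal{N}$. Since $\rank(A_i)=2$, the linear map $A_i$ has trivial kernel, so $f|_{\mathcal{N}}$ is injective, and any left inverse $L$ of $A_i$ (for instance the Moore--Penrose pseudoinverse) satisfies $x=L\left(f(x)-b\right)$ on $\mathcal{N}$; thus the inverse of $f|_{\mathcal{N}}$ is the restriction of the continuous affine map $y\mapsto L(y-b)$. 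Hence $f|_{\mathcal{N}}\colon \mathcal{N}\to f(\mathcal{N})$ is a homeomorphism, where $f(\mathcal{N})$ carries the subspace topology inherited from $\mathbb{R}^3$.

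It then remains to check $f(\mathcal{N})$ against the definition of 2-manifold used in the excerpt. The image $f(\mathcal{N})$ sits inside the $2$-dimensional affine plane $\Pi=b+\operatorname{Im}(A_i)$, and, being the image of the open set $\mathcal{N}$ under the affine isomorphism $\mathbb{R}^2\to\Pi$, $x\mapsto A_ix+b$, it is \emph{relatively open} in $\Pi$; so there is an open set $W\subseteq\mathbb{R}^3$ with $f(\mathcal{N})=W\cap\Pi$. For an arbitrary $\mathbf{q}\in f(\mathcal{N})$ this very $W$ serves as the required ambient neighborhood: $f(\mathcal{N})\cap W=f(\mathcal{N})$ is homeomorphic, via $f|_{\mathcal{N}}^{-1}$ (or, equivalently, any linear isomorphism $\Pi\cong\mathbb{R}^2$), to the open set $U=\mathcal{N}\subseteq\mathbb{R}^2$. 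This is exactly the stated condition, so $f(\mathcal{N})$ is a 2-manifold.

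The argument is mostly bookkeeping, and I expect the only subtle point to be this last verification: one must confirm that being locally homeomorphic to an open subset of $\mathbb{R}^2$ holds in the ambient-$\mathbb{R}^3$ sense of the definition, not merely as an abstract manifold. Both hypotheses are used precisely here --- flatness of the image (valid only on a single affine piece, hence the restriction to the interior of one $P_i$) is what makes $f(\mathcal{N})$ relatively open in a plane, while $\rank(A_i)=2$ is what upgrades the continuous bijection $f|_{\mathcal{N}}$ to a homeomorphism; weakening either one can produce a set that fails the definition, e.g. at a self-intersection or where a piece collapses to a segment.
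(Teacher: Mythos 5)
Your argument is correct and follows the same route as the paper's (very terse) proof: the ReLU network is piecewise affine on finitely many polygonal cells, and on each cell the rank-$2$ condition gives a continuous affine left inverse, making $f$ restricted to a small neighborhood a homeomorphism onto a relatively open subset of a plane in $\mathbb{R}^3$. You simply supply the details the paper leaves implicit, in particular the explicit ambient open set $W$ needed to verify the paper's subspace definition of a 2-manifold.
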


\begin{proof}
The fact that $f$ is locally affine is a direct consequence of the fact that we use ReLU non-linearities. If $\rank(A_i)=2$ the inverse of $A_i x+b$ is well defined on the surface and continuous, thus the image of the interior of each $P_i$ is a 2-manifold.  
\end{proof}
To draw analogy to texture atlases in computer graphics, we call the local functions we learn to approximate a 2-manifold {\it learnable parameterizations} and the set of these functions $A$ a {\it learnable atlas}. 
Note that in general, an MLP locally defines a rank 2 affine transformation and thus locally generates a 2-manifold, but may not globally as it may 
intersect or overlap with itself. The second reason to choose MLPs as a family is that they can allow us to approximate any continuous surface.

\begin{proposition}
Let $S$ be a 2-manifold that can be parameterized on the unit square. For any $\epsilon>0$ there exists an integer $K$ such that a multilayer perceptron with ReLU non linearities and $K$ hidden units can approximate $S$ with a precision $\epsilon$.
\end{proposition}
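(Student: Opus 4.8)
The plan is to reduce the statement to the classical universal approximation theorem for perceptrons. By hypothesis $S$ can be parameterized on the unit square, so fix a continuous map $\varphi\colon[0,1]^2\to\mathbb{R}^3$ with $\varphi\bigl([0,1]^2\bigr)=S$ (obtained from the parameterization of Section~\ref{sec:theory}; if that parameterization lives only on the open square and does not extend continuously to the boundary, I would instead work on a compact sub-square $[\delta,1-\delta]^2$ and absorb the resulting boundary discrepancy into $\epsilon$). Since $[0,1]^2$ is compact and $\varphi$ is continuous, $\varphi$ is uniformly continuous and $S$ is compact. I will interpret ``approximating $S$ with precision $\epsilon$'' as producing an MLP $f$ whose image $\mathcal{S}_\theta=f\bigl([0,1]^2\bigr)$ satisfies $d_H(S,\mathcal{S}_\theta)<\epsilon$, where $d_H$ is the Hausdorff distance.

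Next I would invoke the universal approximation theorem in the form that applies to ReLU activations: because ReLU is continuous and not a polynomial, single-hidden-layer perceptrons with ReLU nonlinearities are dense, in the topology of uniform convergence on compact sets, in the space of continuous functions on $[0,1]^2$ (Cybenko; Hornik; and, for general non-polynomial activations, Leshno et al.). Applying this to each of the three coordinate functions of $\varphi$ with tolerance $\epsilon/\sqrt{3}$ and concatenating the resulting networks yields, for some integer $K$ (the total number of hidden units), an MLP $f$ with $\sup_{x\in[0,1]^2}\bigl\|f(x)-\varphi(x)\bigr\|<\epsilon$. A single-hidden-layer network is a special case of an MLP, so this already establishes the existence of such a $K$; no control on its size is needed.

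It then remains to pass from the sup-norm bound on parameterizations to a Hausdorff bound on the surfaces, which is immediate. For every $\mathbf{p}\in S$ choose $x$ with $\varphi(x)=\mathbf{p}$; then $\bigl\|f(x)-\mathbf{p}\bigr\|<\epsilon$, so $\mathbf{p}$ lies within distance $\epsilon$ of $\mathcal{S}_\theta$. Conversely, every point $f(x)\in\mathcal{S}_\theta$ lies within distance $\epsilon$ of $\varphi(x)\in S$. Hence $d_H(S,\mathcal{S}_\theta)<\epsilon$, which is the claim.

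I expect the only real subtleties to be bookkeeping rather than substance: (i) being careful about the open-versus-closed unit square so that the domain is compact and $\varphi$ is continuous on it (handled by the sub-square reduction above), and (ii) citing the correct version of the approximation theorem — the original Cybenko/Hornik statements are phrased for sigmoidal units, and one needs either the later non-polynomial-activation version to cover ReLU directly, or the elementary observation that a sigmoid is itself uniformly approximable by a small ReLU network on a compact interval. Note that, since the proposition only asks for geometric closeness of $\mathcal{S}_\theta$ to $S$ and not for $\mathcal{S}_\theta$ to itself be a 2-manifold, Proposition~1 and its rank hypothesis are not needed here.
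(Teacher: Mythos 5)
Your proposal is correct and follows exactly the route the paper takes: the paper's entire proof is a one-line appeal to the universal representation theorem of Hornik, and you have simply filled in the details (coordinate-wise approximation of the parameterization, passage from a sup-norm bound to Hausdorff closeness of the images). If anything you are more careful than the paper, since you correctly note that the classical Cybenko/Hornik statements require a bounded or sigmoidal activation and that one needs the Leshno et al.\ non-polynomial version (or an explicit reduction) to cover ReLU.
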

\vspace*{-13pt}
\begin{proof}
This is a consequence of the universal representation theorem \cite{hornik1991approximation}
\end{proof}
\vspace*{-8pt}

In the next section, we show how to train such MLPs to align with a desired surface. 

\section{\ournet}
\label{sec:implementation}

In this section we introduce our model, \ournet{}, which decodes a 3D surface given an encoding of a 3D shape. This encoding can come from many different representations such as a point cloud or an image (see Figure~\ref{fig:teaser_fig} for examples).

\newcommand{\groundtruth}{\mathcal{S}^\star}
\newcommand{\patchinput}{\mathcal{A}}
\newcommand{\shapefeature}{\mathbf{x}}
\newcommand{\mlp}{\varphi}
\newcommand{\lossmlp}{\mathcal{L}}
\newcommand{\lossatlas}{\mathcal{L}^\prime}
\newcommand{\parameters}{\theta}
\newcommand{\parameterset}{\Theta}

\subsection{Learning to decode a surface}

Our goal is, given a feature representation $\shapefeature$ for a 3D shape, to generate the surface of the shape. 
As shown in Section~\ref{sec:theory}, an MLP with ReLUs $\mlp_\parameters$ with parameters $\parameters$ can locally generate a surface by learning to map points in $\mathbb{R}^2$ to surface points in $\mathbb{R}^3$. To generate a given surface, we need several of these learnable charts to represent a surface. In practice, we consider $N$ learnable parameterizations $\phi_{\theta_i}$ for $i \in \lbrace 1,\ ...,N\rbrace$.
To train the MLP parameters $\theta_i$, we need to address two questions: (i) how to define the distance between the generated and target surface, and (ii) how to account for the shape feature $\shapefeature$ in the MLP?
To represent the target surface, we use the fact that, independent of the representation that is available to us, we can sample points on it. Let $\patchinput$ be a set of points sampled in the unit square $[0,1]^2$ and $\groundtruth$  a set of points sampled on the target surface.
Next, we incorporate the shape feature $\shapefeature$ by simply concatenating them with the sampled point coordinates $\mathbf{p}\in\patchinput$ before passing them as input to the MLPs. 
Our model is illustrated in Figure~\ref{fig:ours1}. Notice that the MLPs are not explicitly prevented from encoding the same area of space, but their union should cover the full shape. Our MLPs do depend on the random initialization, but similar to convolutional filter weights the network learns to specialize to different regions %
in the output without explicit biases.
We then minimize the Chamfer loss between the set of generated 3D points and $\groundtruth$,
\begin{multline}
    \lossmlp(\parameters) = \sum_{\mathbf{p}\in\patchinput}\sum_{i=1}^N  \min_{\mathbf{q}\in\groundtruth} \left|\phi_{\theta_i}\left(\mathbf{p}; \shapefeature\right) - \mathbf{q}\right|^2\\
    + \sum_{\mathbf{q}\in\groundtruth}  \min_{i \in \lbrace 1,\ ...,N\rbrace}\min_{\mathbf{p}\in\patchinput}\left|\phi_{\theta_i}\left(\mathbf{p}; \shapefeature\right) - \mathbf{q}\right|^2.
\label{eqn:atlas_loss}
\end{multline}

\subsection{Implementation details}

We consider two tasks: (i) to auto-encode a 3D shape given an input 3D point cloud, and (ii) to reconstruct a 3D shape given an input RGB image. For the auto-encoder, we used an encoder based on PointNet~\cite{qi2016pointnet}, which has proven to be state of the art on point cloud analysis on ShapeNet and ModelNet40 benchmarks. 
This encoder transforms an input point cloud into a latent vector of dimension $k=1024$. We experimented with input point clouds of 250 to 2500 points. For images, we used ResNet-18 \cite{he2016deep} as our encoder. The architecture of our decoder is 4 fully-connected layers of size 1024, 512, 256, 128 with ReLU non-linearities on the first three layers and tanh on the final output layer. We always train with output point clouds of size 2500 evenly sampled across all of the learned parameterizations -- scaling above this size is time-consuming because our implementation of Chamfer loss has a compute cost that is quadratic in the number of input points. 
We experimented with different basic weight regularization options but did not notice any generalization improvement.
Sampling of the learned parameterizations as well as the ground truth point-clouds is repeated at each training step to avoid over-fitting. To train for single-view reconstruction, we obtained the best results by training the encoder and using the decoder from the point cloud autoencoder with fixed parameters.
Finally, we noticed that sampling points regularly on a grid on the learned parameterization yields better performance than sampling points randomly. All results used this regular sampling.

\subsection{Mesh generation}
\label{sec:mesh}
The main advantage of our approach is that during inference, we can easily generate a mesh of the shape.

\myparagraph{Propagate the patch-grid edges to the 3D points.} 
The simplest way to generate a mesh of the surface is to transfer a regular mesh on the unit square to 3D, connecting in 3D the images of the points that are connected in 2D. Note that our method allows us to generate such meshes at very high resolution, without facing memory issues, since the points can be processed in batches. We typically use 22500 points. As shown in the results section, such meshes are satisfying, but they can have several drawbacks: they will not be closed, may have small holes between the images of different learned parameterizations, and different patches may overlap. %

\myparagraph{Generate a highly dense point cloud and use Poisson surface reconstruction (PSR) ~\cite{kazhdan2013screened}.}
To avoid the previously mentioned drawbacks, we can additionally densely sample the surface and use a mesh reconstruction algorithm. We start by generating a surface at a high resolution, as explained above. We then shoot rays at the model from infinity and obtain approximately 100000 points, together with their oriented normals, and then can use a standard oriented cloud reconstruction algorithm such as PSR to produce a triangle mesh. We found that high quality normals as well as high density point clouds are critical to the success of PSR, which are naturally obtained using this method. %

\myparagraph{Sample points on a closed surface rather than patches.} To obtain a closed mesh directly from our method, without requiring the PSR step described above, we can sample the input points from the surface of a 3D sphere instead of a 2D square. The quality of this method depends on how well the underlying surface can be represented by a sphere, which we will explore in Section~\ref{sec:autoencoding}.

\section{Results}
In this section we show qualitative and quantitative results on the tasks of auto-encoding 3D shapes and single-view reconstruction and compare against several baselines. 
In addition to these tasks, we also demonstrate several additional applications of our approach. 
More results are available in the supplementary material \cite{appendix}.%

\myparagraph{Data.} 
We evaluated our approach on the standard ShapeNet Core dataset (v2) \cite{shapenet2015}.
The dataset consists of 3D models covering 13 object categories with 1K-10K shapes per category. 
We used the training and validation split provided by \cite{choy20163d} for our experiments to be comparable with previous approaches.
We used the rendered views provided by  \cite{choy20163d} and sampled 3D points on the shapes using \cite{Wang-2017-OCNN}.

\myparagraph{Evaluation criteria.}
We evaluated our generated shape outputs by comparing to ground truth shapes using two criteria. 
First, we compared point sets for the output and ground-truth shapes using Chamfer distance (``CD''). 
While this criteria compares two point sets, it does not take into account the surface/mesh connectivity. 
To account for mesh connectivity, we compared the output and ground-truth meshes using the ``Metro'' criteria using the publicly available METRO software \cite{cignoni1998metro}, which is the average Euclidean distance between the two meshes. 

\myparagraph{Points baseline.}
In addition to existing baselines, we compare our approach to the multi-layer perceptron ``Points baseline'' network shown in Figure \ref{fig:baseline}. 
The Points baseline network consists of four fully connected layers with output dimensions of size 1024, 512, 256, 7500 with ReLU non-linearities, batch normalization on the first three layers, and a hyperbolic-tangent non-linearity after the final fully connected layer. The network outputs 2500 3D points and has comparable number of parameters to our method with 25 learned parameterizations. The baseline architecture was designed to be as close as possible to the MLP used in \ournet{}. 
As the network outputs points and not a mesh, we also trained a second network that outputs 3D points and normals, which are then passed as inputs to Poisson surface reconstruction (PSR) ~\cite{kazhdan2013screened} to generate a mesh (``Points baseline + normals''). 
The network generates outputs in $\mathbb{R}^6$ representing both the 3D spatial position and normal. 
We optimized Chamfer loss in this six-dimensional space and normalized the normals to 0.1 length as we found this trade-off between the spatial coordinates and normals in the loss worked best. 
As density is crucial to PSR quality, we augmented the number of points by sampling 20 points in a small radius in the tangent plane around each point ~\cite{kazhdan2013screened}.  We noticed significant qualitative and quantitative improvements and the results shown in this paper use this augmentation scheme.

\begin{figure*}[t!]
\centering
\begin{subfigure}[b]{0.12\linewidth}
\centering
 \includegraphics[height=0.6\linewidth]{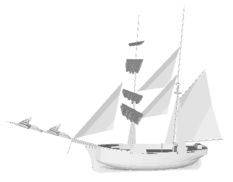}
 \includegraphics[height=0.45\linewidth]{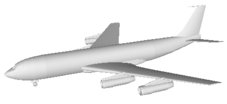}
 \includegraphics[height=0.65\linewidth]{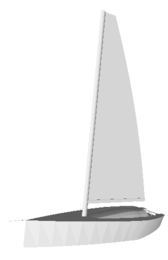}
 \includegraphics[height=0.60\linewidth]{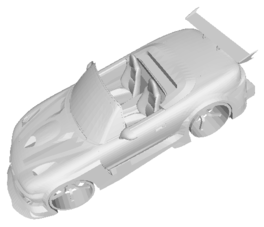}
 \includegraphics[height=0.7\linewidth]{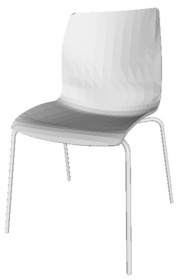}
\caption{Ground truth}
\end{subfigure}
\begin{subfigure}[b]{0.12\linewidth}
\centering
 \includegraphics[height=0.6\linewidth]{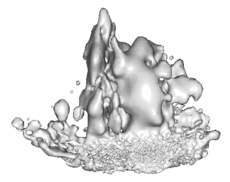}
 \includegraphics[height=0.45\linewidth]{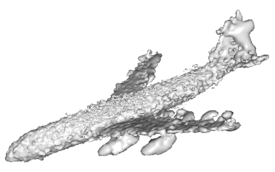}
 \includegraphics[height=0.65\linewidth]{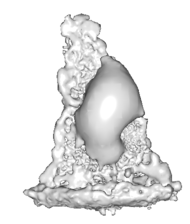}
 \includegraphics[height=0.60\linewidth]{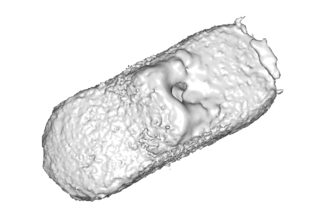}
 \includegraphics[height=0.7\linewidth]{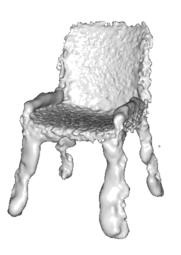}
\caption{Pts baseline}
\label{fig:ae_comparison_base}
\end{subfigure}
\begin{subfigure}[b]{0.12\linewidth}
\centering
 \includegraphics[height=0.6\linewidth]{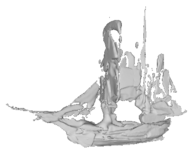}
 \includegraphics[height=0.45\linewidth]{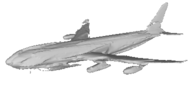}
 \includegraphics[height=0.65\linewidth]{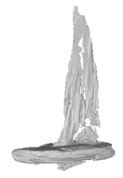}
 \includegraphics[height=0.60\linewidth]{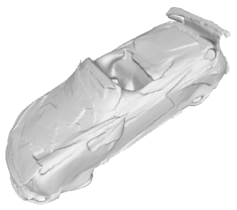}
 \includegraphics[height=0.7\linewidth]{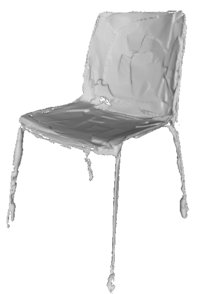}
\caption{PSR on ours}
\label{fig:ae_comparison_psr}
\end{subfigure}
\begin{subfigure}[b]{0.12\linewidth}
\centering
 \includegraphics[height=0.6\linewidth]{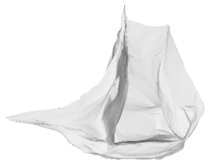}
 \includegraphics[height=0.45\linewidth]{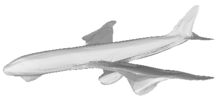}
 \includegraphics[height=0.65\linewidth]{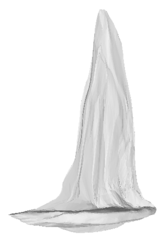}
 \includegraphics[height=0.60\linewidth]{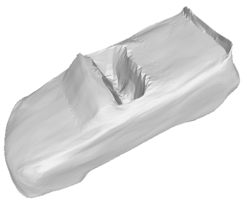}
 \includegraphics[height=0.7\linewidth]{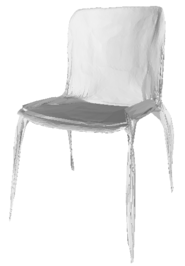}
\caption{Ours sphere}
\end{subfigure}
\begin{subfigure}[b]{0.12\linewidth}
\centering
 \includegraphics[height=0.6\linewidth]{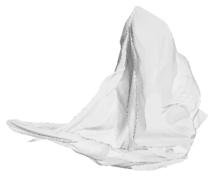}
 \includegraphics[height=0.45\linewidth]{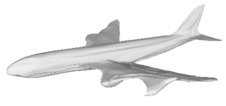}
 \includegraphics[height=0.65\linewidth]{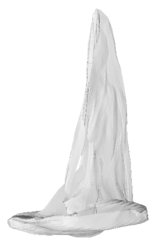}
 \includegraphics[height=0.60\linewidth]{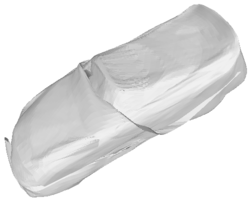}
 \includegraphics[height=0.7\linewidth]{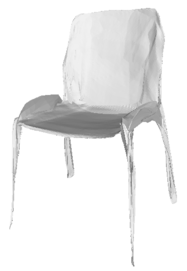}
\caption{Ours 1}
\end{subfigure}
\begin{subfigure}[b]{0.12\linewidth}
\centering
 \includegraphics[height=0.6\linewidth]{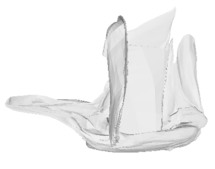}
 \includegraphics[height=0.45\linewidth]{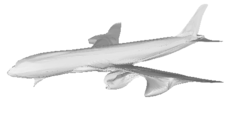}
 \includegraphics[height=0.65\linewidth]{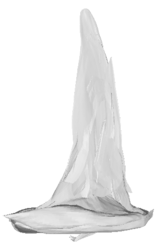}
 \includegraphics[height=0.60\linewidth]{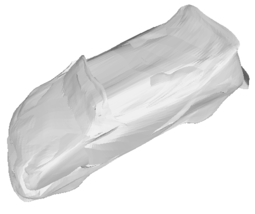}
 \includegraphics[height=0.7\linewidth]{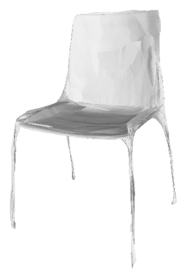}
\caption{Ours 5}
\end{subfigure}
\begin{subfigure}[b]{0.12\linewidth}
\centering
 \includegraphics[height=0.6\linewidth]{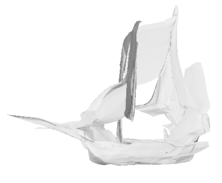}
 \includegraphics[height=0.45\linewidth]{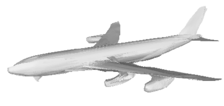}
 \includegraphics[height=0.65\linewidth]{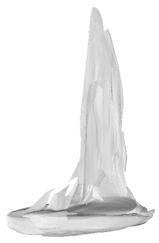}
 \includegraphics[height=0.60\linewidth]{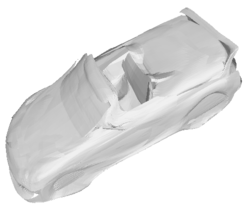}
 \includegraphics[height=0.7\linewidth]{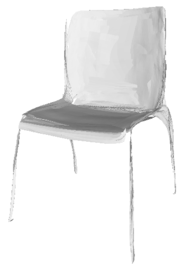}
\caption{Ours 25}
\end{subfigure}
\begin{subfigure}[b]{0.12\linewidth}
\centering
 \includegraphics[height=0.6\linewidth]{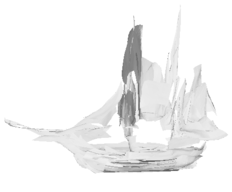}
 \includegraphics[height=0.45\linewidth]{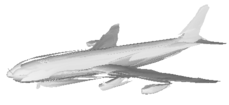}
 \includegraphics[height=0.65\linewidth]{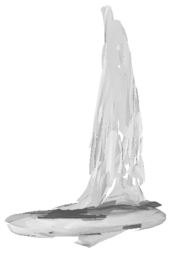}
 \includegraphics[height=0.60\linewidth]{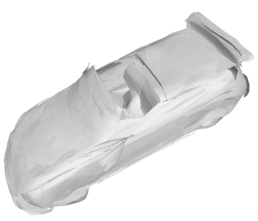}
 \includegraphics[height=0.7\linewidth]{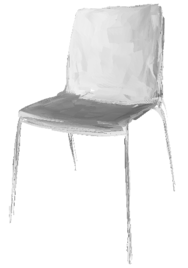}
\caption{Ours 125}
\end{subfigure}
\caption{
{\bf Auto-encoder.}
We compare the original meshes (a) to meshes obtained by running PSR on the point clouds generated by the baseline (b) and on the densely sampled point cloud from our generated mesh (c),  and to our method generating a surface from a sphere (d), 1 (e), 5 (f), 25 (g), and 125(h) learnable parameterizations. Notice the fine details in (g) and (h) : e.g. the plane's engine and the jib of the ship. %
}
  \label{fig:ae_comparison}
  \vspace{-3mm}

\end{figure*}

\subsection{Auto-encoding 3D shapes}
\label{sec:autoencoding}

In this section we evaluate our approach to generate a shape given an input 3D point cloud and compare against the Points baseline.
We evaluate how well our approach can generate the shape, how it can generalize to object categories not seen during training, and its sensitivity to the number of patches.

\begin{table}[t!]
\centering
{
  \begin{tabular}{l|c|c}
  \hline
  Method & CD & Metro    \\

  \hline
  \hline
 Oracle 2500 pts & 0.85 & 1.56 \\
 Oracle 125K pts & - & 1.26 \\
\hline
 Points baseline & {1.91}  & -  \\
 Points baseline + normals & {2.15} & {1.82} (PSR)   \\

 Ours - 1 patch & {1.84} &  {1.53}  \\
 Ours - 1 sphere & {1.72}  & {1.52}  \\
 Ours - 5 patches & {1.57} &  {1.48}  \\
 Ours - 25 patches & {1.56} & {1.47}  \\
 Ours - 125 patches & \textbf{1.51} &  \textbf{1.41}  \\

  \end{tabular}
  }
      \caption{{\bf 3D reconstruction.}
      Comparison of our approach against a point-generation baseline (``CD'' - Chamfer distance, multiplied by $10^3$, computed on 2500 points; ``Metro'' values are multiplied by 10). Note that our approach can be directly evaluated by Metro while the baseline requires performing PSR ~\cite{kazhdan2013screened}. These results can be compared with an Oracle sampling points directly from the ground truth 3D shape followed by PSR (top two rows). See text for details.
      }\label{tab:compare_category}
        \vspace{-3mm}

\end{table}

\myparagraph{Evaluation on surface generation.}
We report quantitative results for shape generation from point clouds in Table~\ref{tab:compare_category}, where each approach is trained over all ShapeNet categories and results are averaged over all categories. 
Notice that our approach out-performs the Points baseline on both the Chamfer distance and Metro criteria, even when using a single learned parameterization (patch). %
Also, the Points baseline + normals has worse Chamfer distance than the Points baseline without normals indicating that predicting the normals decreases the quality of the point cloud generation. 

We also report performance for two ``oracle'' outputs indicating upper bounds in Table~\ref{tab:compare_category}. 
The first oracle (``Oracle 2500 pts'') randomly samples 2500 points+normals from the ground truth shape and applies PSR. 
The Chamfer distance between the random point set and the ground truth gives an upper bound on performance for point-cloud generation. 
Notice that our method out-performs the surface generated from the oracle points. 
The second oracle (``Oracle 125K pts'') applies PSR on all 125K points+normals from the ground-truth shape. 
It is interesting to note that the Metro distance from this result to the ground truth is not far from the one obtained with our method.

We show qualitative comparisons in Figure~\ref{fig:ae_comparison}.
Notice that the PSR from the baseline point clouds (Figure~\ref{fig:ae_comparison_base}) look noisy and lower quality than the meshes produced directly by our method and PSR performed on points generated from our method as described in Section~\ref{sec:mesh} (Figure~\ref{fig:ae_comparison_psr}). %
\myparagraph{Sensitivity to number of patches.}
We show in Table~\ref{tab:compare_category} our approach with varying number of learnable parameterizations (patches) in the atlas.
Notice how our approach improves as we increase the number of patches.
Moreover, we also compare with the approach described in Section \ref{sec:mesh} which samples points on the 3D unit sphere instead of 2D patches to obtain a closed mesh. %
Notice that sampling from a sphere quantitatively out-performs a single patch, but multiple patches perform better.

We show qualitative results for varying number of learnable parameterizations in Figure~\ref{fig:ae_comparison}. As suggested by the quantitative results, the visual quality improves with the number of parameterizations. 
However, more artifacts appear with more parameterizations, such as close-but-disconnected patches (e.g., sail of the sailboat) . We thus used 25 patches for the single-view reconstruction experiments (Section \ref{sec:svr})

\begin{table}[t!]
\centering
{
  \begin{tabular}{l|c|c|c|c}
  \hline
  \multicolumn{2}{c|}{Category} & Points  & Ours & Ours   \\
               & &                 baseline &  1 patch &  125 patches  \\
  \hline
  \hline
  \multirow{2}{*}{chair} & LOO &  {3.66} & {3.43} & \textbf{2.69}  \\
   & All &  {1.88} & {1.97}   & \textbf{1.55}\\
     \hline

  \multirow{2}{*}{car} & LOO & {3.38} & {2.96} & \textbf{2.49}   \\
   & All & {1.59} & {2.28} & \textbf{1.56}   \\
     \hline

  \multirow{2}{*}{watercraft} & LOO & {2.90} & {2.61} & \textbf{1.81}   \\
   & All & {1.69} & {1.69} &\textbf{1.23}   \\
     \hline

  \multirow{2}{*}{plane} & LOO & {6.47} & {6.15} & \textbf{3.58} \\
   & All & {1.11} & {1.04} & \textbf{0.86} \\

  \end{tabular} 
  }
  \caption{
  {\bf Generalization across object categories.}
  Comparison of our approach with varying number of patches against the point-generating baseline to generate a specific category when training on all other ShapeNet categories.
  Chamfer distance is reported, multiplied by $10^3$, computed on 2500 points.
  Notice that our approach with 125 patches out-performs all baselines when generalizing to the new category.
  For reference, we also show performance when we train over all categories.
  }\label{tab:gen_category}
    \vspace{-3mm}
\end{table}
\begin{figure}[t!]
\centering
\begin{subfigure}[b]{0.49\linewidth}
\centering
 \includegraphics[width=0.27\linewidth]{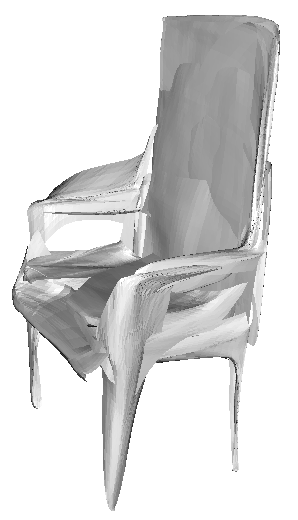}
 \includegraphics[width=0.25\linewidth]{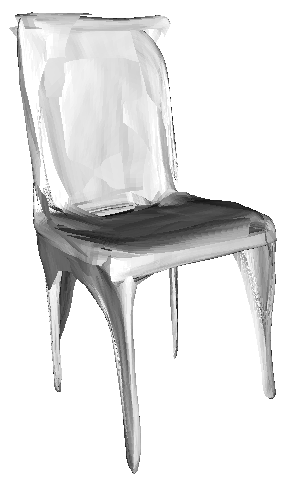}
 \includegraphics[width=0.4\linewidth]{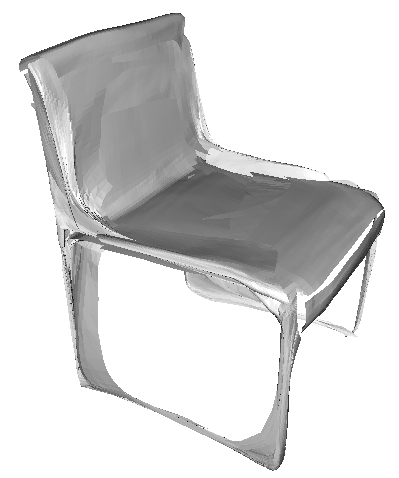}
\caption{Not trained on chairs}
\end{subfigure}
\vline
\begin{subfigure}[b]{0.49\linewidth}
\centering
 \includegraphics[width=0.28\linewidth]{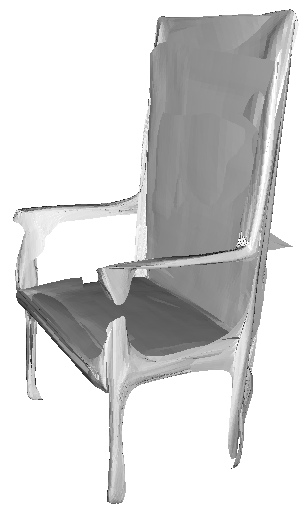}
 \includegraphics[width=0.28\linewidth]{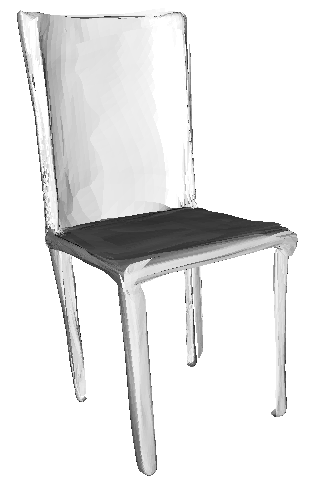}
 \includegraphics[width=0.38\linewidth]{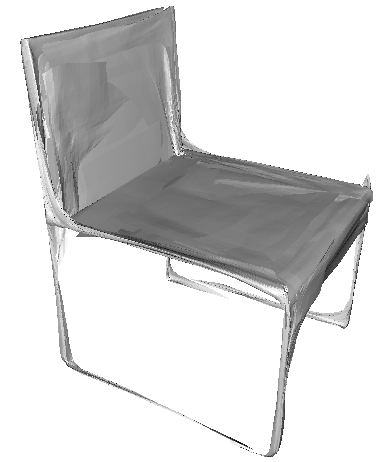}
\caption{Trained on all categories}
\end{subfigure}
\caption{
{\bf Generalization.} (a) Our method (25 patches) can generate surfaces close to a category never seen during training. It, however, has more artifacts than if it has seen the category during training (b), e.g., thin legs and armrests.
}
  \label{fig:gen}
    \vspace{-3mm}
\end{figure}

\myparagraph{Generalization across object categories.} %
An important desired property of a shape auto-encoder is that it generalizes well to categories it has not been trained on. To evaluate this, we trained our method on all categories but one target category (``LOO'') for chair, car, watercraft, and plane categories, and evaluated on the held-out category. The corresponding results are reported in Table~\ref{tab:gen_category} and Figure \ref{fig:gen}. We also include performance when the methods are trained on all of the categories including the target category (``All'') for comparison.
Notice that we again out-perform the point-generating baseline on this leave-one-out experiment and that performance improves with more patches. The car category is especially interesting since when trained on all categories the baseline has better results than our method with 1 patch and similar to our method with 125 patches. If not trained on cars, both our approaches clearly outperform the baseline, showing that at least in this case, our approach generalizes better than the baseline. %
The visual comparison shown Figure \ref{fig:gen} gives an intuitive understanding of the consequences of not training for a specific category. When not trained on chairs, our method seems to struggle to define clear thin structures, like legs or armrests, especially when they are associated to a change in the topological genus of the surface. This is expected as these types of structures are not often present in the categories the network was trained on.

\subsection{Single-view reconstruction}
\label{sec:svr}
\captionsetup[table]{skip=4mm}

\begin{table*}[t!]
\centering
{
\small
  \begin{tabular}{l|c|c|c|c|c|c|c|c|c|c|c|c|c|c}
   &  pla. &  ben. &  cab. &  car &  cha. &  mon. &  lam. &  spe. &  fir. &  cou. &  tab. &  cel. &  wat. &  mean \\
  \hline
 Ba CD & 2.91 & 4.39 & 6.01 & 4.45 & 7.24 & \textbf{5.95} & 7.42 & 10.4 & 1.83 & \textbf{6.65} & 4.83 & 4.66 & \textbf{4.65} & 5.50 \\
 {PSG CD} & 3.36 & 4.31 & 8.51 & 8.63 & \textbf{6.35} & 6.47 & 7.66 & 15.9 & \textbf{1.58} & 6.92 & \textbf{3.93} & \textbf{3.76} & 5.94 & 6.41  \\
 Ours CD & \textbf{2.54} & \textbf{3.91} & \textbf{5.39} & \textbf{4.18} & 6.77 & {6.71} & \textbf{7.24} & \textbf{8.18} & 1.63 & 6.76 & 4.35 & 3.91 & 4.91 & \textbf{5.11}  \\
 \hline
 \hline
 Ours Metro& 1.31 & 1.89 & 1.80 & 2.04 & 2.11 & 1.68 & 2.81 & 2.39 & 1.57 & 1.78 & 2.28 & 1.03 & 1.84 & 1.89 \\
  \end{tabular}
  }
\medskip
\caption{\textbf{Single-View Reconstruction (per category).} The mean is taken category-wise. The Chamfer Distance reported is computed on 1024 points, after running ICP alignment with the GT point cloud, and multiplied by $10^3$. The Metro distance is multiplied by 10.
  }
  \label{tab:SVR}
  \vspace{1mm}
\end{table*}

\setlength{\belowcaptionskip}{-1pt}

\begin{figure}[t!]

\centering
\begin{subfigure}[b]{0.19\linewidth}
\centering
 \includegraphics[height=0.65\linewidth]{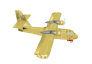}\\
 \includegraphics[height=0.65\linewidth]{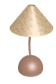}\\
 \includegraphics[height=0.65\linewidth]{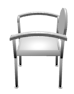}\\
 \includegraphics[height=0.65\linewidth]{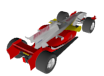}\\
 \includegraphics[height=0.65\linewidth]{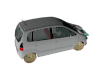}
\caption{Input}
\end{subfigure}
\begin{subfigure}[b]{0.19\linewidth}
\centering
 \includegraphics[height=0.55\linewidth]{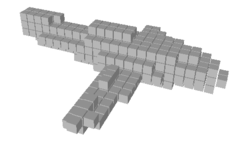}\\[0.1\linewidth]
 ~~~~\includegraphics[height=0.55\linewidth]{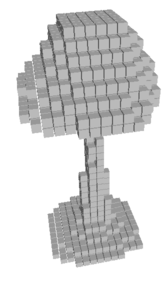}~~~~\\[0.1\linewidth]
 \includegraphics[height=0.55\linewidth]{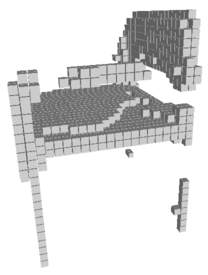}\\[0.1\linewidth]
 \includegraphics[height=0.55\linewidth]{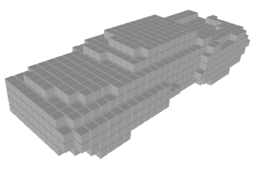}\\[0.1\linewidth]
 \includegraphics[height=0.55\linewidth]{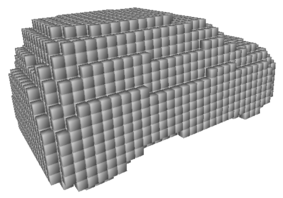}\\[0.05\linewidth]
\caption{{\footnotesize 3D-R2N2}}
\end{subfigure}
\begin{subfigure}[b]{0.19\linewidth}
\centering
 \includegraphics[height=0.55\linewidth]{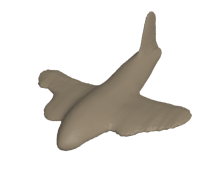}\\[0.05\linewidth]
 \includegraphics[height=0.55\linewidth]{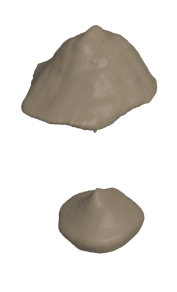}\\[0.1\linewidth]
 \includegraphics[height=0.55\linewidth]{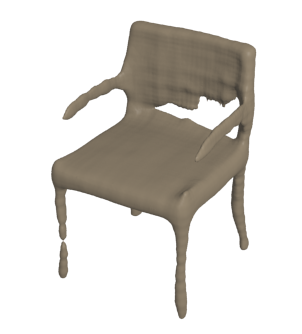}\\[0.1\linewidth]
 ~~~~\includegraphics[height=0.55\linewidth]{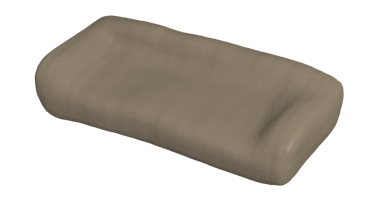}~~~~\\[0.1\linewidth]
 \includegraphics[height=0.55\linewidth]{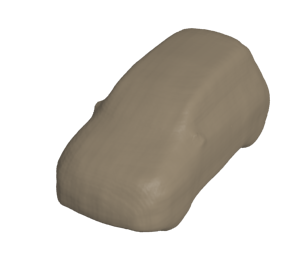}\\[0.1\linewidth]
\caption{HSP}
\end{subfigure}
\begin{subfigure}[b]{0.19\linewidth}
\centering
 \includegraphics[height=0.55\linewidth]{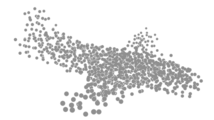}\\[0.1\linewidth]
 ~~~~\includegraphics[height=0.65\linewidth]{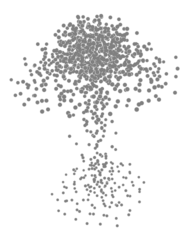}~~~~\\[0.1\linewidth]
 \includegraphics[height=0.55\linewidth]{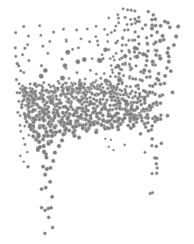}\\[0.1\linewidth]
 \includegraphics[height=0.55\linewidth]{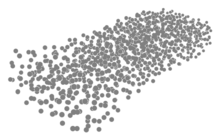}\\[0.1\linewidth]
 \includegraphics[height=0.55\linewidth]{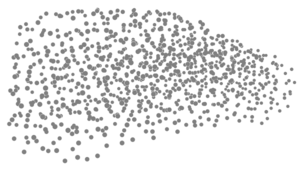}\\[0.05\linewidth]
\caption{PSG}
\end{subfigure}
\begin{subfigure}[b]{0.19\linewidth}
\centering
 \includegraphics[height=0.55\linewidth]{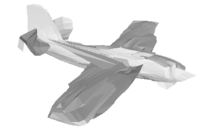}\\[0.1\linewidth]
 ~~~~\includegraphics[height=0.55\linewidth]{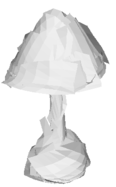}~~~~\\[0.1\linewidth]
 \includegraphics[height=0.55\linewidth]{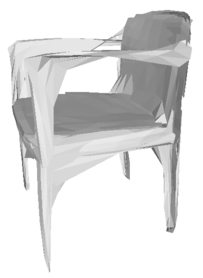}\\[0.1\linewidth]
 \includegraphics[height=0.55\linewidth]{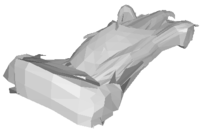}\\[0.1\linewidth]
 \includegraphics[height=0.55\linewidth]{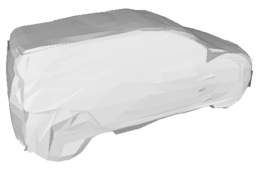}\\[0.05\linewidth]
\caption{Ours}
\end{subfigure}
\caption{
{\bf Single-view reconstruction comparison.}
From a 2D RGB image (a), 3D-R2N2 \cite{choy20163d} reconstructs a voxel-based 3D model (b),  HSP \cite{Hane:2017} reconstructs a octree-based 3D model (c), PointSetGen \cite{Fan:2017:cvpr} a point cloud based 3D model (d), and our \ournet{} a triangular mesh (e).}
 \label{fig:svr_comparison}
 \vspace{-4mm}
\end{figure}

\begin{figure}[t!]
\centering
\begin{subfigure}[b]{0.30\linewidth}
\begin{minipage}[c]{0.99\linewidth}
\centering
 \includegraphics[height=0.55\linewidth, width=0.55\linewidth]{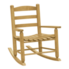}\\
 \includegraphics[height=0.55\linewidth, width=0.55\linewidth]{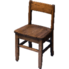}\\
  \includegraphics[height=0.55\linewidth, width=0.55\linewidth]{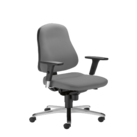}\\
\caption{Input}
\end{minipage}
\end{subfigure}
\begin{subfigure}[b]{0.30\linewidth}
\begin{minipage}[c]{0.99\linewidth}

\centering
 \includegraphics[height=0.55\linewidth, width=0.55\linewidth]{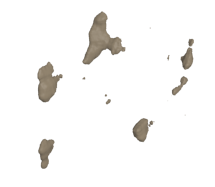}\\
 \includegraphics[height=0.55\linewidth, width=0.55\linewidth]{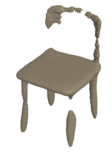}\\
  \includegraphics[height=0.55\linewidth, width=0.55\linewidth]{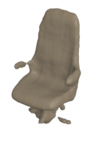}
\caption{HSP}
\end{minipage}

\end{subfigure}
\begin{subfigure}[b]{0.30\linewidth}
\begin{minipage}[c]{0.99\linewidth}

\centering
 \includegraphics[height=0.55\linewidth, width=0.55\linewidth]{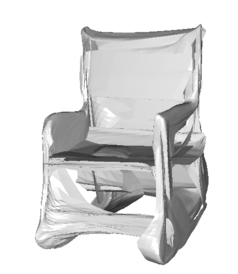}\\
 \includegraphics[height=0.55\linewidth, width=0.55\linewidth]{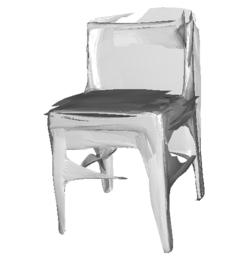}\\
 \includegraphics[height=0.55\linewidth, width=0.55\linewidth]{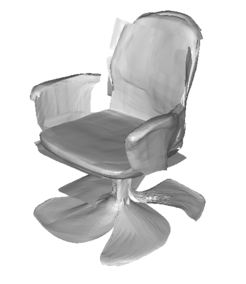}\\

\caption{Ours}
\end{minipage}

\end{subfigure}
\setlength{\belowcaptionskip}{-5pt}
\medskip
\caption{
{\bf Single-view reconstruction comparison on natural images.}
From a 2D RGB image taken from internet (a),  HSP \cite{Hane:2017} reconstructs a octree-based 3D model (b), and our \ournet{} a triangular mesh (c).}
  \label{fig:svr_real_images} \vspace{-4mm}
\end{figure}

We evaluate the potential of our method for single-view reconstruction. We compare qualitatively our results with three state-of-the-art methods, PointSetGen \cite{Fan:2017:cvpr}, 3D-R2N2 \cite{choy20163d} and HSP \cite{Hane:2017} in Figure~\ref{fig:svr_comparison}. To perform the comparison for PointSetGen \cite{Fan:2017:cvpr} and 3D-R2N2 \cite{choy20163d}, we used the trained models made available online by the authors. %
For HSP \cite{Hane:2017}, we asked the authors to run their method on the images in Fig.\ \ref{fig:svr_comparison}. 
Note that since their model was trained on images generated with a different renderer, this comparison is not absolutely fair. To remove the bias we also compared our results with HSP on real images for which none of the methods was trained (Fig.\ \ref{fig:svr_real_images}) which also demonstrates the ability of our network to generalize to real images.

Figure \ref{fig:svr_comparison} emphasizes the importance of the type of output (voxels for 3D-N2D2 and HSP, point cloud for PointSetGen, mesh for us) for the visual appearance of the results. Notice the small details visible on our meshes that may be hard to see on the unstructured point cloud or volumetric representation. Also, it is interesting to see that PointSetGen tends to generate points inside the volume of the 3D shape while our result, by construction, generates points on a surface.

To perform a quantitative comparison against PointSetGen \cite{Fan:2017:cvpr}, we evaluated the Chamfer distance between generated points and points from the original mesh for both PointSetGen and our method with 25 learned parameterizations. However, the PointSetGen network was trained with a translated, rotated, and scaled version of ShapeNet with parameters we did not have access to. We thus first had to align the point clouds resulting from PointSetGen to the ShapeNet models used by our algorithm.
We randomly selected 260 shapes, 20 from each category, and ran the iterative closest point (ICP) algorithm~\cite{besl1992method} to optimize a similarity transform between PointSetGen and the target point cloud. Note that this optimization improves the Chamfer distance between the resulting point clouds, but is not globally convergent. We checked visually that the point clouds from PointSetGen were correctly aligned, and display all alignments on the project webpage\footnote{\url{http://imagine.enpc.fr/~groueixt/atlasnet/PSG.html}.}. To have a fair comparison we ran the same ICP alignment on our results. In Table \ref{tab:SVR} we compared the resulting Chamfer distance. Our method provides the best results on 6 categories whereas PointSetGen and the baseline are best on 4 and 3 categories, respectively. Our method is better on average and generates point clouds of a quality similar to the state of the art. We also report the Metro distance to the original shape, which is the most meaningful measure for our method.

To quantitatively compare against HSP \cite{Hane:2017}, we retrained our method on their publicly available data since train/test splits are different from 3D-R2N2 \cite{choy20163d} and they made their own renderings of ShapeNet data. Results are in Table \ref{tab:hsp_compare}. More details are in the supplementary \cite{appendix}.
\setlength{\belowcaptionskip}{2pt}

\begin{table}[h!]
\centering
 \begin{tabular}{l|c|c|c|c|c|c}
 & \multicolumn{1}{|l|}{Chamfer} & \multicolumn{1}{|l|}{Metro} \\
\hline
HSP  \cite{Hane:2017} & 11.6 &  1.49\\
Ours  (25 patches)  & \textbf{9.52} &  \textbf{1.09} \\
 \end{tabular}
 \medskip
  \caption{{\bf Single-view reconstruction.} Quantitative comparison against HSP \cite{Hane:2017}, a state of the art octree-based method. The average error is reported, on 100 shapes from each category. The Chamfer Distance reported is computed on $10^4$ points, and multiplied by $10^3$. The Metro distance is multiplied by 10. More details are in the supplemetary \cite{appendix}.}\label{tab:hsp_compare}
    \vspace{-3mm}
\end{table}

\begin{figure*}[t!]
\centering
\begin{subfigure}[b]{0.32\linewidth}
 \includegraphics[width=\linewidth]{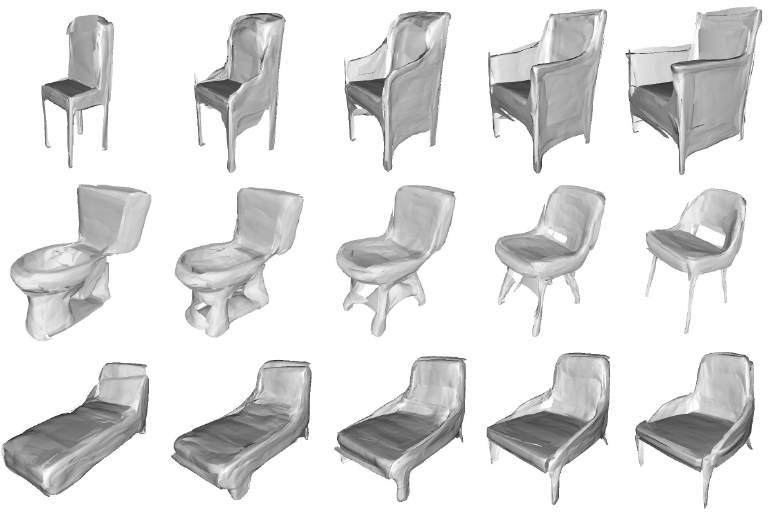}
\caption{Shape interpolation. \label{fig:interpolation}}
\end{subfigure}
~\vline~
\begin{subfigure}[b]{0.32\linewidth}
 \includegraphics[width=\linewidth]{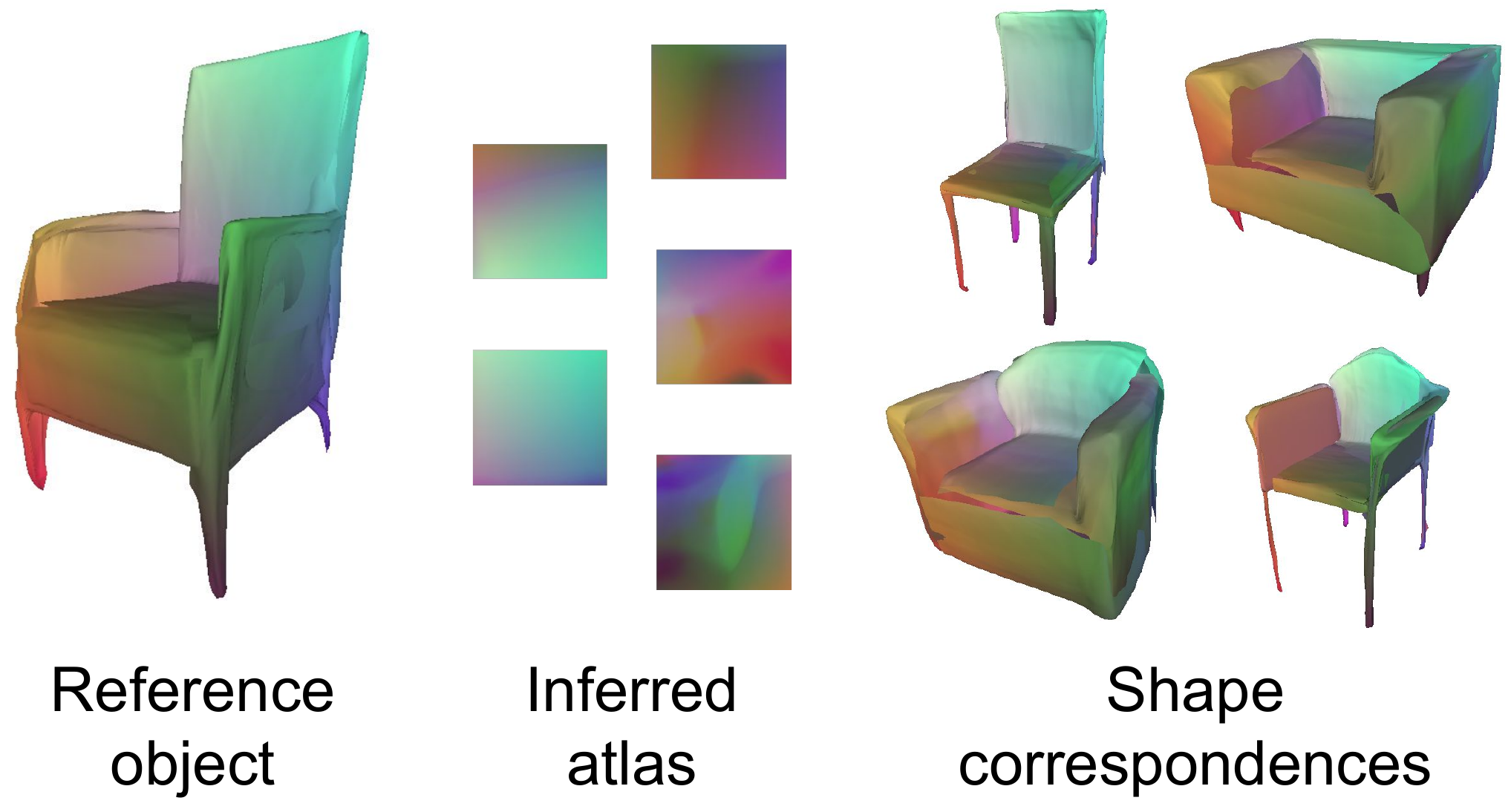}
\caption{Shape correspondences. \label{fig:corresp}}
\end{subfigure}
~\vline~
\begin{subfigure}[b]{0.32\linewidth}
 \includegraphics[width=\linewidth]{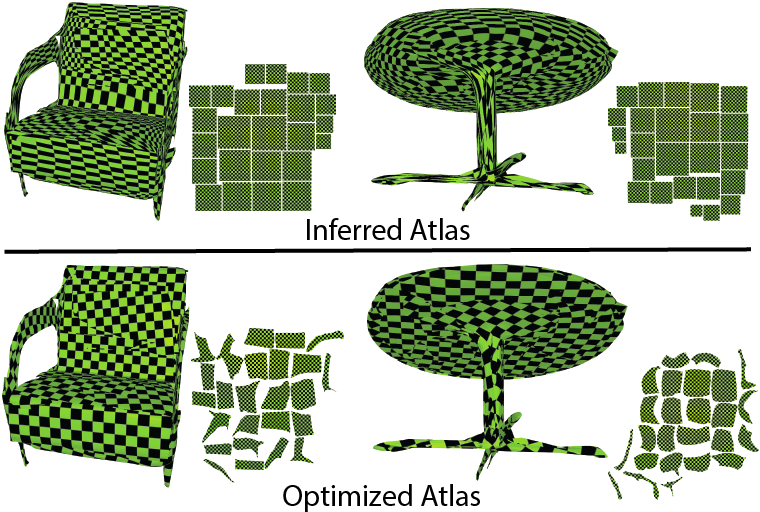}
\caption{Mesh parameterization. \label{fig:mesh_parameterization}}
\end{subfigure}
\caption{{\bf Applications.} Results from three applications of our method. See text for details.
}
\vspace{-10pt}

  \label{fig:applications}
\end{figure*}

\subsection{Additional applications}

\myparagraph{Shape interpolation.}
Figure \ref{fig:interpolation} shows shape interpolation.
Each row shows interpolated shapes generated by our \ournet{}, starting from the shape in the first column to the shape in the last. Each intermediate shape is generated using a weighted sum of the latent representations of the two extreme shaped.
Notice how the interpolated shapes gradually add armrests in the first row, and chair legs in the last.

\myparagraph{Finding shape correspondences.}
Figure~\ref{fig:corresp} shows shape correspondences.
We colored the surface of reference chair (left) according to its 3D position.
We transfer the surface colors from the reference shape to the inferred atlas (middle).
Finally, we transfer the atlas colors to other shapes (right) such that points with the same color are parametrized by the same point in the atlas.
Notice that we get semantically meaningful correspondences, such as the chair back, seat, and legs without any supervision from the dataset on semantic information.

\myparagraph{Mesh parameterization}
Most existing rendering pipelines require an atlas for texturing a shape (Figure~\ref{fig:mesh_parameterization}). A good parameterization should minimize
amount of area distortion ($E_a$) and stretch ($E_s$) of a UV map.
We computed average per-triangle distortions for 20 random shapes from each category and found that our inferred atlas usually has relatively high texture distortion ($E_a\!=\!1.9004, E_s\!=\!6.1613$, where undistorted map has $E_a\!=\!E_s\!=\!1$).
Our result, however, is well-suited for distortion minimization because all meshes have disk-like topology and inferred map is bijective, making it easy to further minimize distortion with off-the-shelf geometric optimization~\cite{AQP:2016}, yielding small distortion ($E_a\!=\!1.0016, E_s\!=\!1.025$, see bottom row for example).

\myparagraph{Limitations and future work} are detailed in the supplementary materials \cite{appendix}.
 
\section{Conclusion}
We have introduced an approach to generate parametric surface elements for 3D shapes. 
We have shown its benefits for 3D shape and single-view reconstruction, out-performing existing baselines. 
In addition, we have shown its promises for shape interpolation, finding shape correspondences, and mesh parameterization. 
Our approach opens up applications in generation and synthesis of meshes for 3D shapes, similar to still image generation~\cite{Isola:2017,Zhu:2017:cylcegan}.

\myparagraph{Acknowledgments.} This work was partly supported by ANR project EnHerit ANR-17-CE23-0008, Labex B\'ezout, and gifts from Adobe to \'Ecole des Ponts. We thank Aaron Herzmann for fruitful discussions, Christian H\"ane for his help in comparing to \cite{Hane:2017}
and Kevin Wampler for helping with geometric optimization for surface parameterization. 

{\small
\bibliographystyle{ieee}
\bibliography{egbib}

\begin{thebibliography}{10}\itemsep=-1pt

\bibitem{appendix}
Supplementary material (appendix) for the paper
  \url{https://http://imagine.enpc.fr/~groueixt/atlasnet/arxiv}.

\bibitem{Bansal:2016}
A.~Bansal, B.~C. Russell, and A.~Gupta.
\newblock Marr revisited: 2d-3d alignment via surface normal prediction.
\newblock In {\em Proceedings of IEEE Conference on Computer Vision and Pattern
  Recognition (CVPR)}, 2016.

\bibitem{besl1992method}
P.~J. Besl, N.~D. McKay, et~al.
\newblock A method for registration of 3-d shapes.
\newblock {\em IEEE Transactions on pattern analysis and machine intelligence},
  14(2):239--256, 1992.

\bibitem{Bogo:CVPR:2014}
F.~Bogo, J.~Romero, M.~Loper, and M.~J. Black.
\newblock {FAUST}: Dataset and evaluation for {3D} mesh registration.
\newblock In {\em Proceedings IEEE Conf. on Computer Vision and Pattern
  Recognition (CVPR)}, Piscataway, NJ, USA, June 2014. IEEE.

\bibitem{Boscaini16}
D.~Boscaini, J.~Masci, E.~Rodola, and M.~M. Bronstein.
\newblock Learning shape correspondence with anisotropic convolutional neural
  networks.
\newblock {\em NIPS}, 2016.

\bibitem{shapenet2015}
A.~X. Chang, T.~Funkhouser, L.~Guibas, P.~Hanrahan, Q.~Huang, Z.~Li,
  S.~Savarese, M.~Savva, S.~Song, H.~Su, J.~Xiao, L.~Yi, and F.~Yu.
\newblock {ShapeNet: An Information-Rich 3D Model Repository}.
\newblock Technical Report arXiv:1512.03012 [cs.GR], Stanford University ---
  Princeton University --- Toyota Technological Institute at Chicago, 2015.

\bibitem{choy20163d}
C.~B. Choy, D.~Xu, J.~Gwak, K.~Chen, and S.~Savarese.
\newblock {3D-R2N2}: A unified approach for single and multi-view {3D} object
  reconstruction.
\newblock In {\em Proceedings of European Conference on Computer Vision
  (ECCV)}, 2016.

\bibitem{cignoni1998metro}
P.~Cignoni, C.~Rocchini, and R.~Scopigno.
\newblock Metro: Measuring error on simplified surfaces.
\newblock In {\em Computer Graphics Forum}, volume~17, pages 167--174. Wiley
  Online Library, 1998.

\bibitem{Fan:2017:cvpr}
H.~Fan, H.~Su, and L.~Guibas.
\newblock A point set generation network for 3{D} object reconstruction from a
  single image.
\newblock In {\em Proceedings of IEEE Conference on Computer Vision and Pattern
  Recognition (CVPR)}, 2017.

\bibitem{Girdhar16b}
R.~Girdhar, D.~Fouhey, M.~Rodriguez, and A.~Gupta.
\newblock Learning a predictable and generative vector representation for
  objects.
\newblock In {\em Proceedings of European Conference on Computer Vision
  (ECCV)}, 2016.

\bibitem{Gu02}
X.~Gu, S.~Gortler, and H.~Hoppe.
\newblock Geometry images.
\newblock {\em SIGGRAPH}, 2002.

\bibitem{HRSC:2017}
X.~Han, Z.~Li, H.~Huang, E.~Kalogerakis, and Y.~Yu.
\newblock High-resolution shape completion using deep neural networks for
  global structure and local geometry inference.
\newblock In {\em Proceedings of IEEE International Conference on Computer
  Vision (ICCV)}, 2017.

\bibitem{Hane:2017}
C.~H{\"a}ne, S.~Tulsiani, and J.~Malik.
\newblock Hierarchical surface prediction for 3{D} object reconstruction.
\newblock In {\em Proceedings of the International Conference on 3D Vision
  (3DV)}, 2017.

\bibitem{he2016deep}
K.~He, X.~Zhang, S.~Ren, and J.~Sun.
\newblock Deep residual learning for image recognition.
\newblock In {\em Proceedings of the IEEE conference on computer vision and
  pattern recognition}, pages 770--778, 2016.

\bibitem{Hormann08}
K.~Hormann, K.~Polthier, and A.~Sheffer.
\newblock Mesh parameterization: Theory and practice.
\newblock In {\em ACM SIGGRAPH ASIA 2008 Courses}, SIGGRAPH Asia '08, pages
  12:1--12:87, New York, NY, USA, 2008. ACM.

\bibitem{hornik1991approximation}
K.~Hornik.
\newblock Approximation capabilities of multilayer feedforward networks.
\newblock {\em Neural networks}, 4(2):251--257, 1991.

\bibitem{Isola:2017}
P.~Isola, J.-Y. Zhu, T.~Zhou, and A.~Efros.
\newblock Image-to-image translation with conditional adversarial networks.
\newblock In {\em Proceedings of IEEE Conference on Computer Vision and Pattern
  Recognition (CVPR)}, 2017.

\bibitem{kazhdan2013screened}
M.~Kazhdan and H.~Hoppe.
\newblock Screened poisson surface reconstruction.
\newblock {\em ACM Transactions on Graphics (TOG)}, 32(3):29, 2013.

\bibitem{AQP:2016}
S.~Z. Kovalsky, M.~Galun, and Y.~Lipman.
\newblock Accelerated quadratic proxy for geometric optimization.
\newblock {\em ACM Transactions on Graphics (proceedings of ACM SIGGRAPH)},
  2016.

\bibitem{li_sig17}
J.~Li, K.~Xu, S.~Chaudhuri, E.~Yumer, H.~Zhang, and L.~Guibas.
\newblock {GRASS}: Generative recursive autoencoders for shape structures.
\newblock {\em ACM Transactions on Graphics (Proc. of SIGGRAPH 2017)}, 36(4),
  2017.

\bibitem{Li:2015:purification}
Y.~Li, H.~Su, C.~Qi, N.~Fish, D.~Cohen-Or, and L.~Guibas.
\newblock Joint embeddings of shapes and images via {CNN} image purification.
\newblock {\em Transactions on Graphics (SIGGRAPH Asia 2015)}, 2015.

\bibitem{Maron17}
H.~Maron, M.~Galun, N.~Aigerman, M.~Trope, N.~Dym, E.~Yumer, V.~G. Kim, and
  Y.~Lipman.
\newblock Convolutional neural networks on surfaces via seamless toric covers.
\newblock {\em SIGGRAPH}, 2017.

\bibitem{Masci15}
J.~Masci, D.~Boscaini, M.~M. Bronstein, and P.~Vandergheynst.
\newblock Geodesic convolutional neural networks on riemannian manifolds.
\newblock {\em 3dRR}, 2015.

\bibitem{Massa:2016}
F.~Massa, B.~C. Russell, and M.~Aubry.
\newblock Deep exemplar {2D-3D} detection by adapting from real to rendered
  views.
\newblock In {\em Proceedings of IEEE Conference on Computer Vision and Pattern
  Recognition (CVPR)}, 2016.

\bibitem{qi2016pointnet}
C.~R. Qi, H.~Su, K.~Mo, and L.~J. Guibas.
\newblock Point{N}et: Deep learning on point sets for 3{D} classification and
  segmentation.
\newblock In {\em Proceedings of IEEE Conference on Computer Vision and Pattern
  Recognition (CVPR)}, 2017.

\bibitem{Qi:2017:nips}
C.~R. Qi, L.~Yi, H.~Su, and L.~J. Guibas.
\newblock Point{N}et++: Deep hierarchical feature learning on point sets in a
  metric space.
\newblock In {\em Advances in Neural Information Processing Systems (NIPS)},
  2017.

\bibitem{Riegler2017THREEDV}
G.~Riegler, A.~O. Ulusoy, H.~Bischof, and A.~Geiger.
\newblock Oct{N}et{F}usion: Learning depth fusion from data.
\newblock In {\em Proceedings of the International Conference on 3D Vision
  (3DV)}, 2017.

\bibitem{Sander03}
P.~Sander, Z.~Wood, S.~Gortler, J.~Snyder, and H.~Hoppe.
\newblock Multi-chart geometry images.
\newblock {\em SGP}, 2003.

\bibitem{Sinha2016}
A.~Sinha, J.~Bai, and K.~Ramani.
\newblock Deep learning 3d shape surfaces using geometry images.
\newblock In {\em Proceedings of IEEE Conference on Computer Vision and Pattern
  Recognition (CVPR)}, 2016.

\bibitem{Sinha2017}
A.~Sinha, A.~Unmesh, Q.~Huang, and K.~Ramani.
\newblock Surfnet: Generating 3d shape surfaces using deep residual networks.
\newblock In {\em Proceedings of IEEE Conference on Computer Vision and Pattern
  Recognition (CVPR)}, 2017.

\bibitem{TDB17b}
M.~Tatarchenko, A.~Dosovitskiy, and T.~Brox.
\newblock Octree generating networks: Efficient convolutional architectures for
  high-resolution 3{D} outputs.
\newblock In {\em Proceedings of IEEE International Conference on Computer
  Vision (ICCV)}, 2017.

\bibitem{tulsiani2016learning}
S.~Tulsiani, H.~Su, L.~J. Guibas, A.~A. Efros, and J.~Malik.
\newblock Learning shape abstractions by assembling volumetric primitives.
\newblock {\em arXiv preprint arXiv:1612.00404}, 2016.

\bibitem{Wang-2017-OCNN}
P.-S. Wang, Y.~Liu, Y.-X. Guo, C.-Y. Sun, and X.~Tong.
\newblock O-cnn: Octree-based convolutional neural networks for 3d shape
  analysis.
\newblock {\em ACM Transactions on Graphics (SIGGRAPH)}, 36(4), 2017.

\bibitem{wu20153d}
Z.~Wu, S.~Song, A.~Khosla, F.~Yu, L.~Zhang, X.~Tang, and J.~Xiao.
\newblock 3d shapenets: A deep representation for volumetric shapes.
\newblock In {\em Proceedings of the IEEE Conference on Computer Vision and
  Pattern Recognition}, pages 1912--1920, 2015.

\bibitem{Zhu:2017:cylcegan}
J.-Y. Zhu, T.~Park, P.~Isola, and A.~Efros.
\newblock Unpaired image-to-image translation using cycle-consistent
  adversarial networks.
\newblock In {\em Proceedings of IEEE International Conference on Computer
  Vision (ICCV)}, 2017.

\end{thebibliography}
}
\clearpage
\section{Supplementary}
\subsection{Overview}
This document provides more detailed quantitative and qualitative results highlighting the strengths and limitations of \ournet{}. \\
\vspace*{-3pt}

\paragraph{Detailed results, per category, for the autoencoder} These tables report the metro reconstruction error and the chamfer distance error. It surprisingly shows that our method with 25 learned parameterizations outperforms our method with 125 learned parameterizations in 7 categories out of 13 for the metro distance, but is significantly worse on the cellphone category, resulting in the 125 learned parameterizations approach being better on average. This is not mirrored in the Chamfer distance.
\vspace*{-2pt}

\paragraph{Regularisation} In  the  autoencoder experiment, we tried using weight decay with different weight. The best results were obtained without any regularization.
\vspace*{-2pt}

\paragraph{Limitations} We describe two limitations with our approach. First, when a small number of learned parameterizations are used, the network has to distort them too much to recreate the object. This leads, when we try to recreate a mesh, to small triangles in the learned parameterization space being distorted and become large triangles in 3D covering undesired regions. On the other hand, as the number of  learned parameterization increases, errors in the topology of the reconstructed mesh can be sometimes observed. 
In practice, it means that the reconstructed patches overlap, or are not stiched together.
\vspace*{-2pt}

\paragraph{Additional Single View Reconstruction qualitative results} In this figure, we show one example of single-view reconstruction per category and compare with the state of the art, PointSetGen and 3D-R2N2. We consistently show that our method produces a better reconstruction.
\vspace*{-2pt}

\paragraph{Additional Autoencoder qualitative results}
In this figure, we show one example per category of autoencoder reconstruction for the baseline and our various approaches to reconstruct meshes, detailed in the main paper. We show how we are able to recreate fine surfaces.
\vspace*{-2pt}

\paragraph{Additional Shape Correspondences qualitative results} We color each vertex of the reference object by its distance to the gravity center of the object, and transfer these colors to the inferred atlas. We then propagate them to other objects of the same category, showing semantically meaningful correspondences between them. Results for the plane and watercraft categories are shown and generalize to all categories.

\paragraph{Deformable shapes.}
We ran an experiment on human shape to show that our method is also suitable for reconstructing deformable shapes. The FAUST dataset~\cite{Bogo:CVPR:2014} is a collection of meshes representing several humans in different poses. We used 250 shapes for training, and 50 for validation (without using the ground truth correspondences in any way). In table \ref{tab:faust}, we report the reconstruction error in term of Chamfer distance and Metro distance for our method with 25 squarred parameterizations, our methods with a sphere parametrization, and for the baseline. We found results to be consistent with the analysis on ShapeNet. Qualitative results are shown in figure~\ref{fig:faust}, revealing that our method leads to qualitatively good reconstructions. %

\begin{table}[h!]
\centering
 \begin{tabular}{l|c|c|c|c|c|c}
 & \multicolumn{1}{|l|}{Chamfer} & \multicolumn{1}{|l|}{Metro} \\
\hline
25 patches  & 15.47 &  11.62 \\
1 Sphere  & 15.78 &  15.22 \\
1 Ref. Human & 16.39 & 13.46 
 \end{tabular}

  \caption{{\bf 3D Reconstruction on FAUST~\cite{Bogo:CVPR:2014}.} We trained the baseline and our method sampling the points according from 25 square patches, and from a sphere on the human shapes from the FAUST dataset. We report Chamfer distance (x $10^4$) on the points and Metro distance (x10) on the meshes. }\label{tab:faust}
    \vspace{-3mm}
\end{table}

\paragraph{Point cloud super-resolution}
AtlasNet can generate pointclouds or meshes of arbitrary resolution simply by sampling more points.
Figure ~\ref{fig:super_resolution} shows qualitative results of our approach with 25 patches generating high resolution meshes with 122500 points. Moreover,  PointNet is able to take an arbitrary number of points as input and encodes a minimal shape based on a subset of the input points. This is a double-edged sword : while it allows the autoencoder to work with varying number of input points, it also prevent it from reconstructing very fine details, as they are not used by PointNet and thus not present in the latent code. We show good results using only 250 input points, despite the fact that we train using 2500 input points which shows the capacity of our decoder to interpolate a surface from a small number of input points, and the flexibility of our pipeline.

\begin{figure*}
\centering
\begin{minipage}[c]{0.09\linewidth}
    2500 \\ points \\ \\ \\ \\
    250  \\ points
\end{minipage}
\begin{minipage}[c]{0.42\linewidth}
  \begin{subfigure}[b]{0.99\linewidth}
  \centering
   \includegraphics[height=0.55\linewidth, width=0.45\linewidth]{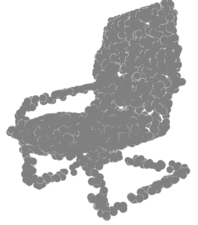}\\
   \includegraphics[height=0.55\linewidth, width=0.45\linewidth]{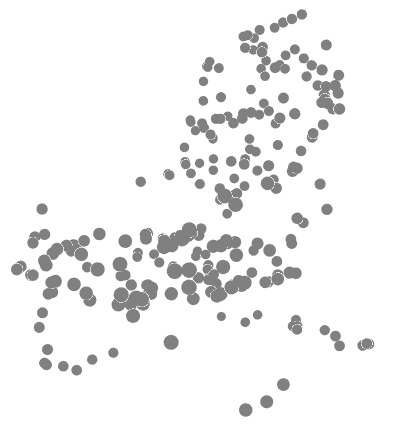}\\
  \caption{Low-Res Input}
  \end{subfigure}
\end{minipage}
\begin{minipage}[c]{0.42\linewidth}
  \begin{subfigure}[b]{0.99\linewidth}
  \centering
   \includegraphics[height=0.55\linewidth, width=0.45\linewidth]{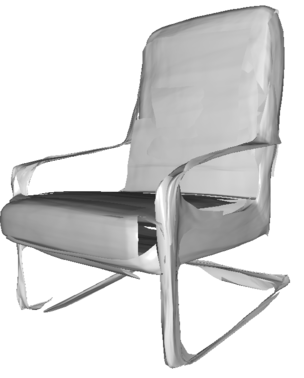}\\
   \includegraphics[height=0.55\linewidth, width=0.45\linewidth]{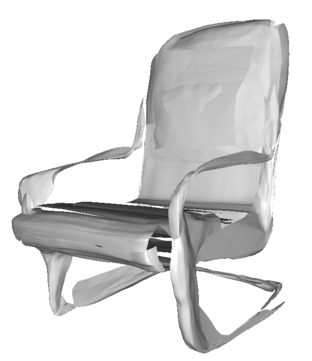}\\
  \caption{High-Res reconstruction}
  \end{subfigure}
\end{minipage}

\caption{
\textbf{Super resolution.} Our approach can generate meshes at arbitrary resolutions, and the pointnet encoder \cite{qi2016pointnet} can take pointclouds of varying resolution as input. Given the same shape sampled at the training resolution of 2500, or 10 times less points, we generate high resolution meshes with 122500 vertices. This can be viewed as the 3D equivalent of super-resolution on 2D pixels.}
\label{fig:super_resolution}
\end{figure*}

\paragraph{Details on the comparison against HSP \cite{Hane:2017}} 

We perform a quantitative comparison against an octree-based state of the art method. AtlasNet is trained with 25 learned parameterizations on the same data as their publicly available trained model\footnote{\url{https://github.com/chaene/hsp}.}. 100 random samples are drawn from each category from the test split. We evaluated the the quality of the reconstruction using the Chamfer distance on the unnormalized meshes, and the metro distance. Voxelised versions of meshes often appear inflated. This bias can appear for HSP, where we observed that the generated meshes were slightly larger than the original meshes. We ran an ICP alignment procedure on the generated meshes for both methods to remove this bias. In table \ref{tab:HSP}, we report per category results. As AtlasNet was specifically trained to optimise the chamfer distance, we outperform HSP in every category. AtlasNet also outperforms HSP in metro distance in each category for the metro distance, for which none of the two algorithm where trained to optimise. List of sampled used, ans trained model for AtlasNet are available in the github repository.

\paragraph{Limitations and future work}

Our results have limitations that lead to many open question and perspective for future work.
First, the patches for our generated shapes are not guaranteed to be connected (except if the surface the input points are sampled from is already closed, as in the sphere experiment). An open question is how to effectively stitch the patches together to form a closed shape.
Second, we have demonstrated results on synthetic object shapes. Ideally, we would like to extend to entire real scenes.
Third, we have optimized the parameterization of the generated meshes post-hoc. It would be good to directly learn to generate the surfaces with low distortion parameterizations.
Fourth, this work generates surfaces by minimizing an energy computed from point clouds. An open question is how to define a loss on meshes that is easy to optimize?
Finally, as the atlases provide promising correspondences across different shapes, an interesting future direction is to leverage them for shape recognition and segmentation.
\vspace*{-2pt}

\begin{table*}[hb!]
\centering
{
\small
  \begin{tabular}{l|c|c|c|c|c|c|c|c|c|c|c|c|c|c}
   &  pla. &  ben. &  cab. &  car &  cha. &  mon. &  lam. &  spe. &  fir. &  cou. &  tab. &  cel. &  wat. &  mean \\
  \hline
 {Baseline PSR} & 2.71 & 2.12 & 1.98 & 2.24 & 2.68 & 1.78 & 2.58 & 2.29 & 1.03 & 1.90 & 2.66 & 1.15 & 2.46 & 2.12 \\
 {Baseline PSR PA} & 1.38 & 1.97 & 1.75 & 2.04 & 2.08 & 1.53 & 2.51 & 2.25 & 1.46 & 1.57 & 2.06 & 1.15 & 1.80 & 1.82 \\
 Ours 1 patch & 1.11 & 1.41 & 1.70 & 1.93 & 1.76 & 1.35 & 2.01 & 2.30 & 1.01 & 1.46 & 1.46 & 0.87 & 1.46 & 1.53  \\
 Ours 1 sphere & 1.03 & 1.33 & 1.64 & 1.99 & 1.76 & 1.30 & 2.06 & 2.33 & 0.93 & 1.41 & 1.59 & 0.79 & 1.54 & 1.52  \\
 Ours 5 patch & 0.99 & 1.36 & 1.65 & \textbf{1.90} & 1.79 & 1.28 & 2.00 & 2.27 & 0.92 & \textbf{1.37} & 1.57 & \textbf{0.76} & 1.40 & 1.48 \\
 Ours 25 patch & \textbf{0.96} & 1.35 & 1.63 & 1.96 & 1.49 & \textbf{1.22} & \textbf{1.86} & \textbf{2.22} & 0.93 & 1.36 & \textbf{1.31} & 1.41 & \textbf{1.35} & 1.47 \\
 Ours 125 patch & 1.01 & \textbf{1.30} & \textbf{1.58} & \textbf{1.90} & \textbf{1.36} & 1.29 & 1.95 & 2.29 & \textbf{0.85} & 1.38 & 1.34 & \textbf{0.76} & 1.37 & \textbf{1.41} \\

\hline
  \end{tabular}

  }
  \caption{\textbf{Auto-Encoder (per category).} The mean is taken category-wise. The Metro Distance is reported, multiplied by $10$. The meshes were contructed by propagating the patch grid edges.
  }
  \label{tab:SVR_AE}
\end{table*}

\vspace*{-8pt}
\begin{table*}
\centering
{
\small
  \begin{tabular}{l|c|c|c|c|c|c|c|c|c|c|c|c|c|c}
   &  pla. &  ben. &  cab. &  car &  cha. &  mon. &  lam. &  spe. &  fir. &  cou. &  tab. &  cel. &  wat. &  mean \\
  \hline
 {Baseline} &  1.11 & 1.46 & 1.91 & 1.59 & 1.90 & 2.20 & 3.59 & 3.07 & 0.94 & 1.83 & 1.83 & 1.71 & 1.69 & 1.91  \\
 {Baseline +  normal} &  1.25 & 1.73 & 2.19 & 1.74 & 2.19 & 2.52 & 3.89 & 3.51 & 0.98 & 2.13 & 2.17 & 1.87 & 1.88 & 2.15  \\
 {Ours 1 patch} &  1.04 &  1.43 &  1.79 &  2.28 &  1.97 &  1.83 &  3.06 &  2.95 &  0.76 &  1.90 &  1.95 &  1.29 &  1.69 &  1.84 \\
 Ours 1 sphere & 0.98 &  1.31 &  2.02 &  1.75 &  1.81 &  1.83 &  2.59 &  2.94 &  0.69 &  1.73 &  1.88 &  1.30 &  1.51 & 1.72 \\
 {Ours 5 patch} &  0.96 &  1.21 &  \textbf{1.64} &  1.76 &  1.60 &  \textbf{1.66} &  2.51 &  \textbf{2.55} &  0.68 &  1.64 &  1.52 &  \textbf{1.25} &  1.46 &  1.57 \\
 {Ours 25 patch} &  0.87 &  1.25 &  1.78 &  1.58 &  1.56 &  1.72 &  2.30 &  2.61 &  0.68 &  1.83 &  1.52 &  1.27 &  1.33 &  1.56 \\
 {Ours 125 patch} &  \textbf{0.86} &  \textbf{1.15} &  1.76 &  \textbf{1.56} &  \textbf{1.55} &  1.69 &  \textbf{2.26} &  \textbf{2.55} &  \textbf{0.59} &  \textbf{1.69} &  \textbf{1.47} &  1.31 &  \textbf{1.23} & \textbf{1.51}\\

\hline
  \end{tabular}

  }
  \caption{\textbf{Auto-Encoder (per category).} The mean is taken category-wise. The Chamfer Distance is reported, multiplied by $10^3$.
  }
  \label{tab:SVR_AE_chamfer}
\end{table*}

\begin{table*}[hb!]
\centering
{
\small
  \begin{tabular}{c|l|c|c|c|c|c|c|c|c|c|c|c|c|c|c}
  & &  pla. &  ben. &  cab. &  car &  cha. &  mon. &  lam. &  spe. &  fir. &  cou. &  tab. &  cel. &  wat. &  mean \\
  \hline
 metro & {HSP} &1.10 & 1.84 & 1.28 & 1.06 & 1.61 & 1.66 & 1.93 & 1.77 & 1.05 & 1.37 & 1.93 & 1.39 & 1.34 & 1.49 \\
  & Ours 25 patch & \textbf{0.77} & \textbf{1.01} & \textbf{1.04} & \textbf{0.92} & \textbf{1.19} & \textbf{1.22} & \textbf{1.26} & \textbf{1.46} & \textbf{0.95} & \textbf{1.19} & \textbf{1.27} & \textbf{0.83} & \textbf{1.09} & \textbf{1.09}\\
 \hline
 chamfer & {HSP} & 2.60 & 17.4 & 14.3 & 1.77 & 10.0 & 19.4 & 9.46 & 21.7 & 2.34 & 12.9 & 20.2 & 13.2 & 4.89 &11.6 \\
  & Ours 25 patch & \textbf{1.33} & \textbf{14.1} & \textbf{12.5} & \textbf{1.29} & \textbf{7.23} & \textbf{17.5} & \textbf{6.99} & \textbf{17.8} & \textbf{1.69} & \textbf{11.2} & \textbf{17.0} & \textbf{10.6} & \textbf{4.20} & \textbf{9.52}\\
\hline
  \end{tabular}

  }
  \caption{{\bf Single-view reconstruction.} Quantitative comparison against HSP \cite{Hane:2017}, a state of the art octree-based method. The average error is reported, on 100 shapes from each category. The Chamfer Distance reported is computed on $10^4$ points, and multiplied by $10^3$. The Metro distance is multiplied by 10.}
  \label{tab:HSP}
\end{table*}

\begin{table*}
\centering
{
\small
  \begin{tabular}{l|c}
  Weight Decay &  Ours : 25 patches \\
  \hline
 {$10^{-3}$} &  8.57   \\
 {$10^{-4}$} &  4.84   \\
 {$10^{-5}$} &  3.42   \\
 {$0$} &  1.56   \\

\hline
  \end{tabular}

  }
  \caption{\textbf{Regularization on Auto-Encoder (per category).} The mean is taken category-wise. The Chamfer Distance is reported, multiplied by $10^3$.
  }
  \label{tab:SVR_regul}
\end{table*}

\newpage
\begin{figure*}[t!]
\centering
\begin{subfigure}[b]{0.24\linewidth}
\centering
 \includegraphics[height=0.40\linewidth]{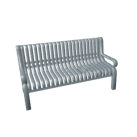}\\
 \includegraphics[height=0.40\linewidth]{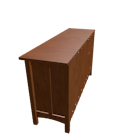}\\
 \includegraphics[height=0.40\linewidth]{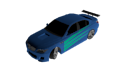}\\
 \includegraphics[height=0.40\linewidth]{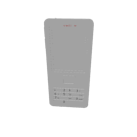}\\
 \includegraphics[height=0.40\linewidth]{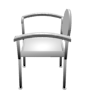}\\
 \includegraphics[height=0.40\linewidth]{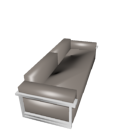}\\
 \includegraphics[height=0.40\linewidth]{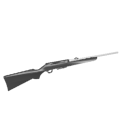}\\
 \includegraphics[height=0.40\linewidth]{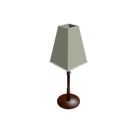}\\
 \includegraphics[height=0.40\linewidth]{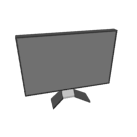}\\
 \includegraphics[height=0.40\linewidth]{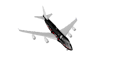}\\
 \includegraphics[height=0.40\linewidth]{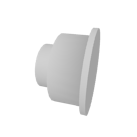}\\
 \includegraphics[height=0.40\linewidth]{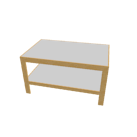}\\
 \includegraphics[height=0.40\linewidth]{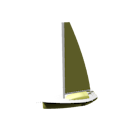}
\caption{input}
\end{subfigure}
\begin{subfigure}[b]{0.24\linewidth}
\centering
 \includegraphics[height=0.30\linewidth]{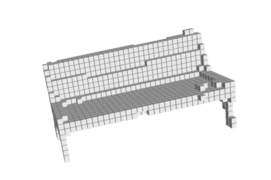}\\[0.1\linewidth]
 \includegraphics[height=0.30\linewidth]{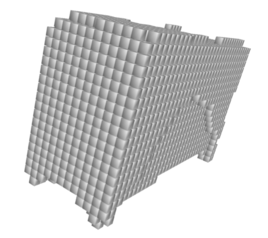}\\[0.1\linewidth]
 \includegraphics[height=0.30\linewidth]{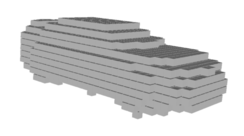}\\[0.1\linewidth]
 \includegraphics[height=0.30\linewidth]{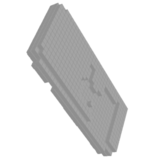}\\[0.1\linewidth]
 \includegraphics[height=0.30\linewidth]{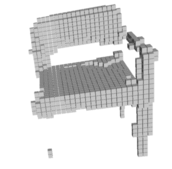}\\[0.1\linewidth]
 \includegraphics[height=0.30\linewidth]{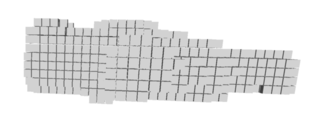}\\[0.1\linewidth]
 \includegraphics[height=0.30\linewidth]{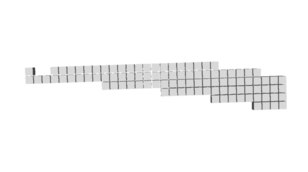}\\[0.1\linewidth]
 \includegraphics[height=0.30\linewidth]{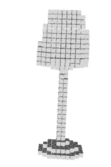}\\[0.1\linewidth]
 \includegraphics[height=0.30\linewidth]{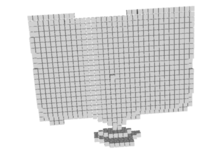}\\[0.1\linewidth]
 \includegraphics[height=0.30\linewidth]{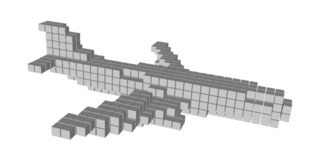}\\[0.1\linewidth]
 \includegraphics[height=0.30\linewidth]{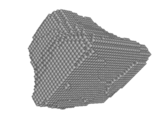}\\[0.1\linewidth]
 \includegraphics[height=0.30\linewidth]{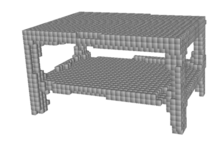}\\[0.1\linewidth]
 \includegraphics[height=0.30\linewidth]{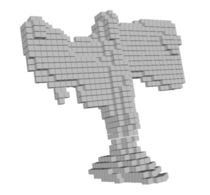}\\[0.1\linewidth]
\caption{3D-R2N2}
\end{subfigure}
\begin{subfigure}[b]{0.24\linewidth}
\centering
 \includegraphics[height=0.30\linewidth]{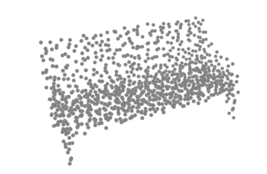}\\[0.1\linewidth]
 \includegraphics[height=0.30\linewidth]{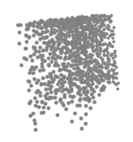}\\[0.1\linewidth]
 \includegraphics[height=0.30\linewidth]{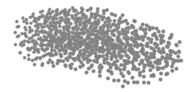}\\[0.1\linewidth]
 \includegraphics[height=0.30\linewidth]{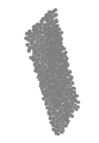}\\[0.1\linewidth]
 \includegraphics[height=0.30\linewidth]{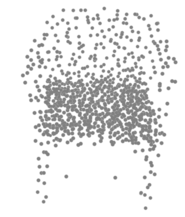}\\[0.1\linewidth]
 \includegraphics[height=0.30\linewidth]{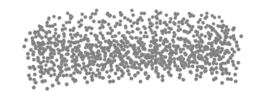}\\[0.1\linewidth]
 \includegraphics[height=0.30\linewidth]{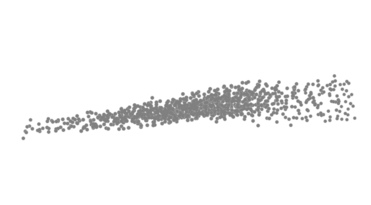}\\[0.1\linewidth]
 \includegraphics[height=0.30\linewidth]{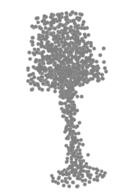}\\[0.1\linewidth]
 \includegraphics[height=0.30\linewidth]{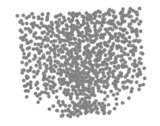}\\[0.1\linewidth]
 \includegraphics[height=0.30\linewidth]{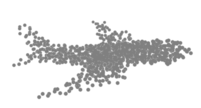}\\[0.1\linewidth]
 \includegraphics[height=0.30\linewidth]{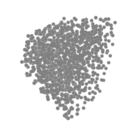}\\[0.1\linewidth]
 \includegraphics[height=0.30\linewidth]{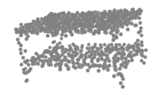}\\[0.1\linewidth]
 \includegraphics[height=0.30\linewidth]{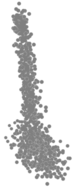}\\[0.1\linewidth]
\caption{PSG}
\end{subfigure}
\begin{subfigure}[b]{0.24\linewidth}
\centering
 \includegraphics[height=0.30\linewidth]{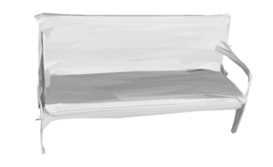}\\[0.1\linewidth]
 \includegraphics[height=0.30\linewidth]{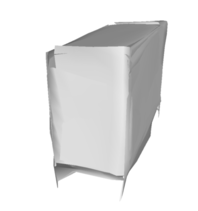}\\[0.1\linewidth]
 \includegraphics[height=0.30\linewidth]{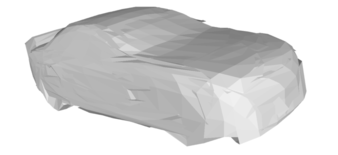}\\[0.1\linewidth]
 \includegraphics[height=0.30\linewidth]{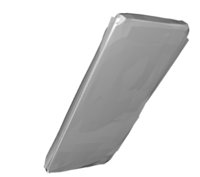}\\[0.1\linewidth]
 \includegraphics[height=0.30\linewidth]{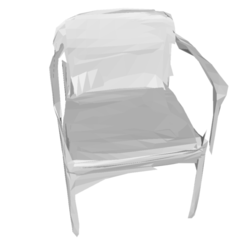}\\[0.1\linewidth]
 \includegraphics[height=0.30\linewidth]{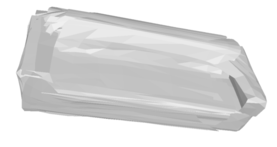}\\[0.1\linewidth]
 \includegraphics[height=0.30\linewidth]{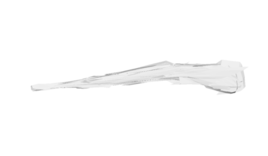}\\[0.05\linewidth]
 \includegraphics[height=0.30\linewidth]{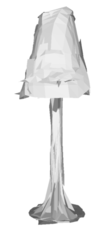}\\[0.1\linewidth]
 \includegraphics[height=0.30\linewidth]{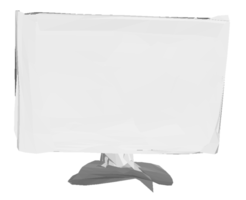}\\[0.1\linewidth]
 \includegraphics[height=0.30\linewidth]{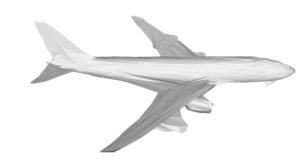}\\[0.1\linewidth]
 \includegraphics[height=0.30\linewidth]{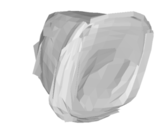}\\[0.1\linewidth]
 \includegraphics[height=0.30\linewidth]{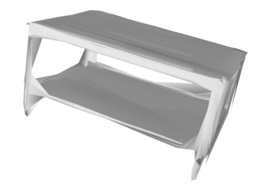}\\[0.1\linewidth]
 \includegraphics[height=0.30\linewidth]{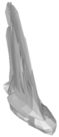}\\[0.1\linewidth]
\caption{Ours}
\end{subfigure}
\caption{
{\bf Single-view reconstruction comparison:}
From a 2D RGB image (a), 3D-R2N2 reconstructs a voxel-based 3D model (b), PointSetGen a point cloud based 3D model (c), and our \ournet{} a triangular mesh (d).}
\vspace{-10ptpt}
  \label{fig:svr_comparison_2_2}
\end{figure*}

\newpage
\begin{figure*}[t!]
\centering
\begin{subfigure}[b]{0.24\linewidth}
\centering
 \includegraphics[height=0.30\linewidth]{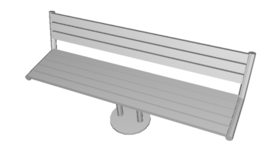}\\[0.1\linewidth]
 \includegraphics[height=0.30\linewidth]{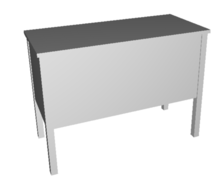}\\[0.1\linewidth]
 \includegraphics[height=0.30\linewidth]{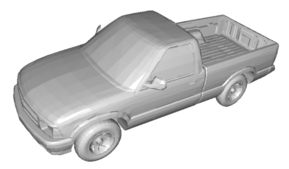}\\[0.1\linewidth]
 \includegraphics[height=0.30\linewidth]{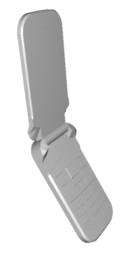}\\[0.1\linewidth]
 \includegraphics[height=0.30\linewidth]{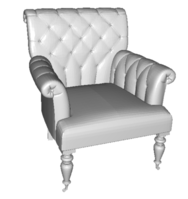}\\[0.1\linewidth]
 \includegraphics[height=0.30\linewidth]{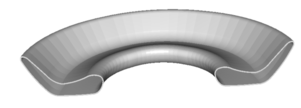}\\[0.1\linewidth]
 \includegraphics[height=0.30\linewidth]{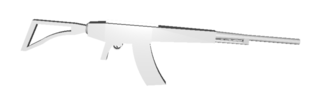}\\[0.1\linewidth]
 \includegraphics[height=0.30\linewidth]{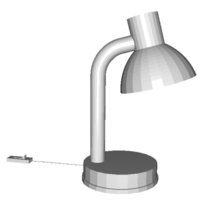}\\[0.1\linewidth]
 \includegraphics[height=0.30\linewidth]{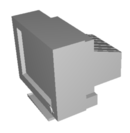}\\[0.1\linewidth]
 \includegraphics[height=0.30\linewidth]{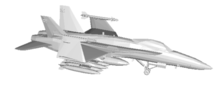}\\[0.1\linewidth]
 \includegraphics[height=0.30\linewidth]{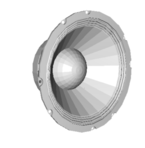}\\[0.1\linewidth]
 \includegraphics[height=0.30\linewidth]{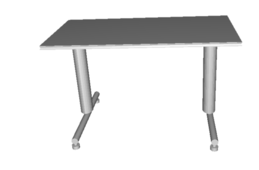}\\[0.1\linewidth]
 \includegraphics[height=0.30\linewidth]{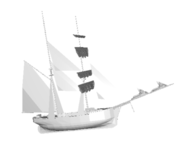}\\[0.1\linewidth]
\caption{Ground truth}
\end{subfigure}
\begin{subfigure}[b]{0.24\linewidth}
\centering
 \includegraphics[height=0.30\linewidth]{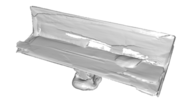}\\[0.1\linewidth]
 \includegraphics[height=0.30\linewidth]{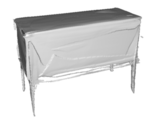}\\[0.1\linewidth]
 \includegraphics[height=0.30\linewidth]{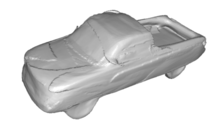}\\[0.1\linewidth]
 \includegraphics[height=0.30\linewidth]{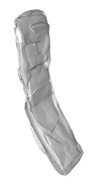}\\[0.1\linewidth]
 \includegraphics[height=0.30\linewidth]{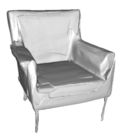}\\[0.1\linewidth]
 \includegraphics[height=0.30\linewidth]{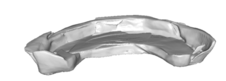}\\[0.1\linewidth]
 \includegraphics[height=0.30\linewidth]{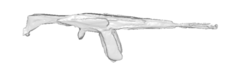}\\[0.1\linewidth]
 \includegraphics[height=0.30\linewidth]{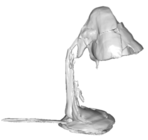}\\[0.1\linewidth]
 \includegraphics[height=0.30\linewidth]{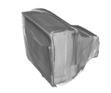}\\[0.1\linewidth]
 \includegraphics[height=0.30\linewidth]{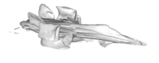}\\[0.1\linewidth]
 \includegraphics[height=0.30\linewidth]{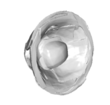}\\[0.1\linewidth]
 \includegraphics[height=0.30\linewidth]{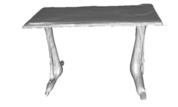}\\[0.1\linewidth]
 \includegraphics[height=0.30\linewidth]{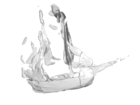}\\[0.1\linewidth]
\caption{PSR on ours}
\end{subfigure}
\begin{subfigure}[b]{0.24\linewidth}
\centering
 \includegraphics[height=0.30\linewidth]{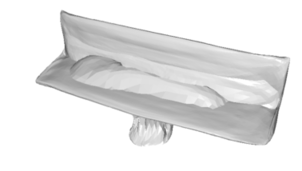}\\[0.1\linewidth]
 \includegraphics[height=0.30\linewidth]{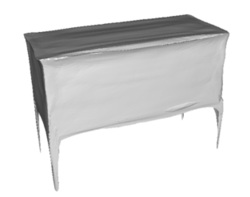}\\[0.1\linewidth]
 \includegraphics[height=0.30\linewidth]{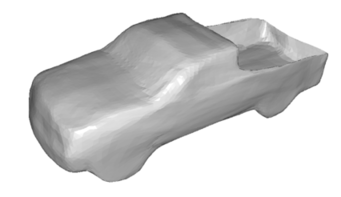}\\[0.1\linewidth]
 \includegraphics[height=0.30\linewidth]{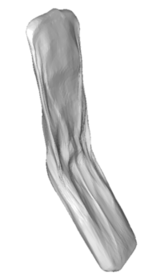}\\[0.1\linewidth]
 \includegraphics[height=0.30\linewidth]{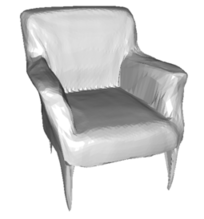}\\[0.1\linewidth]
 \includegraphics[height=0.30\linewidth]{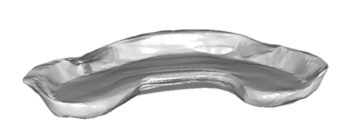}\\[0.1\linewidth]
 \includegraphics[height=0.30\linewidth]{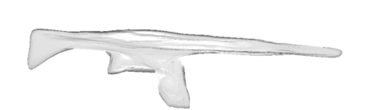}\\[0.1\linewidth]
 \includegraphics[height=0.30\linewidth]{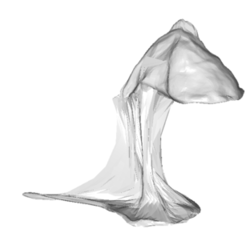}\\[0.1\linewidth]
 \includegraphics[height=0.30\linewidth]{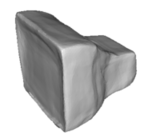}\\[0.1\linewidth]
 \includegraphics[height=0.30\linewidth]{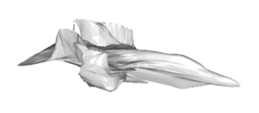}\\[0.1\linewidth]
 \includegraphics[height=0.30\linewidth]{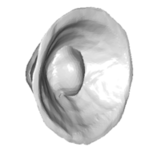}\\[0.1\linewidth]
 \includegraphics[height=0.30\linewidth]{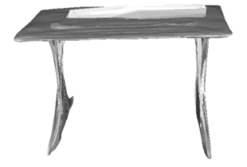}\\[0.1\linewidth]
 \includegraphics[height=0.30\linewidth]{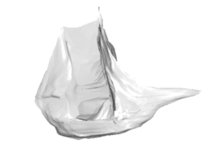}\\[0.1\linewidth]
\caption{Ours sphere}
\end{subfigure}
\begin{subfigure}[b]{0.24\linewidth}
\centering
 \includegraphics[height=0.30\linewidth]{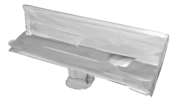}\\[0.1\linewidth]
 \includegraphics[height=0.30\linewidth]{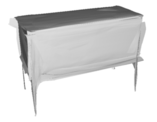}\\[0.1\linewidth]
 \includegraphics[height=0.30\linewidth]{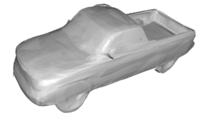}\\[0.1\linewidth]
 \includegraphics[height=0.30\linewidth]{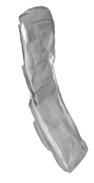}\\[0.1\linewidth]
 \includegraphics[height=0.30\linewidth]{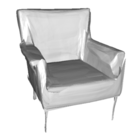}\\[0.1\linewidth]
 \includegraphics[height=0.30\linewidth]{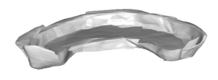}\\[0.1\linewidth]
 \includegraphics[height=0.30\linewidth]{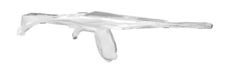}\\[0.1\linewidth]
 \includegraphics[height=0.30\linewidth]{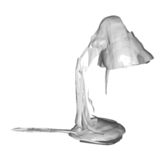}\\[0.1\linewidth]
 \includegraphics[height=0.30\linewidth]{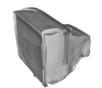}\\[0.1\linewidth]
 \includegraphics[height=0.30\linewidth]{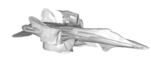}\\[0.1\linewidth]
 \includegraphics[height=0.30\linewidth]{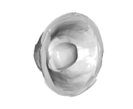}\\[0.1\linewidth]
 \includegraphics[height=0.30\linewidth]{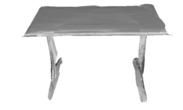}\\[0.1\linewidth]
 \includegraphics[height=0.30\linewidth]{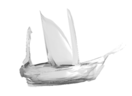}\\[0.1\linewidth]
\caption{Ours 25}
\end{subfigure}
\caption{
{\bf Autoencoder comparison:}
We compare the original meshes (a) to meshes obtained by running PSR (b) on the dense point cloud sampled from our generated mesh,  and to our method generating a surface from a sphere (c), and 25 (d) learnable parameterizations.} %
\vspace{-10ptpt}
  \label{fig:svr_comparison_2}
\end{figure*}

\newpage
\begin{figure*}[t]
 \includegraphics[width=0.9\linewidth]{./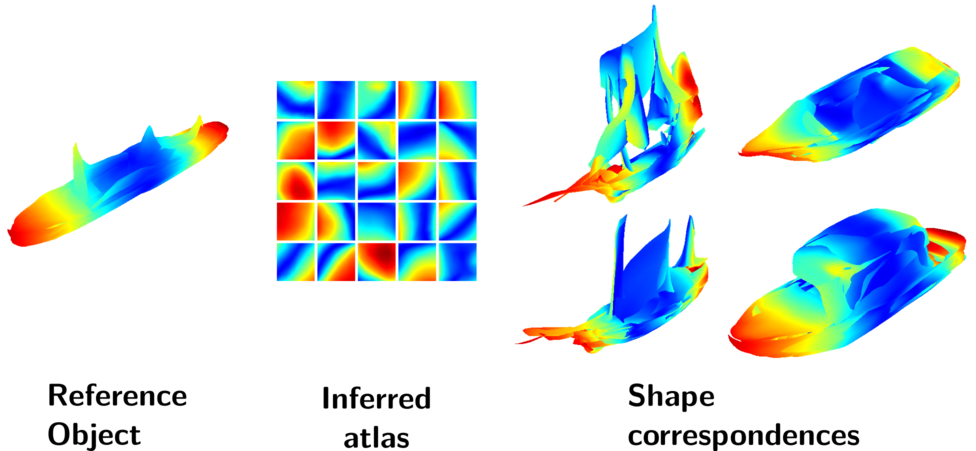}
\caption{{\bf Shape correspondences:} a reference watercraft (left) is colored by distance to the center, with the jet colormap. We transfer the surface colors to the inferred atlas for the reference shape (middle). Finally, we transfer the atlas colors to other shapes (right). Notice that we get semantically meaningful correspondences, without any supervision from the dataset on semantic information. All objects are generated by the autoencoder, with 25 learned parametrizations.\label{fig:corresp_1}}
\end{figure*}
\begin{figure*}[b]
 \includegraphics[width=0.9\linewidth]{./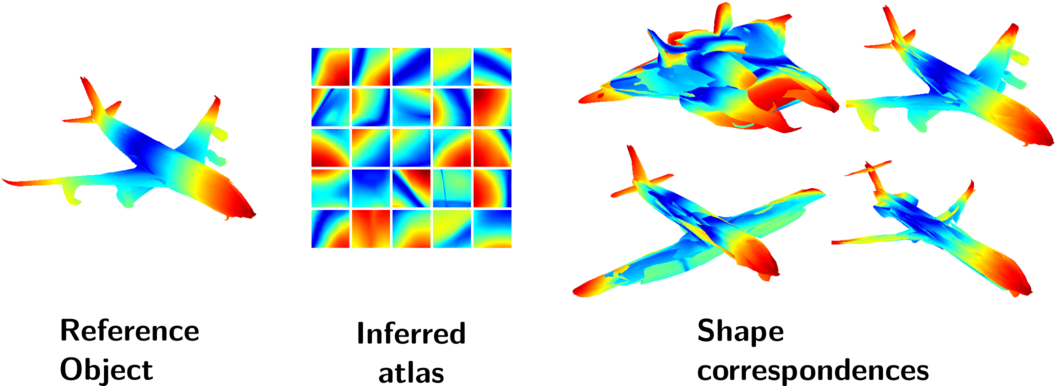}
\caption{{\bf Shape correspondences:} a reference plane (left) is colored by distance to the center, with the jet colormap. We transfer the surface colors to the inferred atlas for the reference shape (middle). Finally, we transfer the atlas colors to other shapes (right). Notice that we get semantically meaningful correspondences, such as the nose and tail of the plane, and the tip of the wings, without any supervision from the dataset on semantic information. All objects are generated by the autoencoder, with 25 learned parametrizations.\label{fig:corresp_2}}
\end{figure*}

\newpage
\begin{figure*}[t!]
\centering
\begin{subfigure}[b]{0.98\linewidth}
\centering
 \includegraphics[width=0.3\linewidth]{./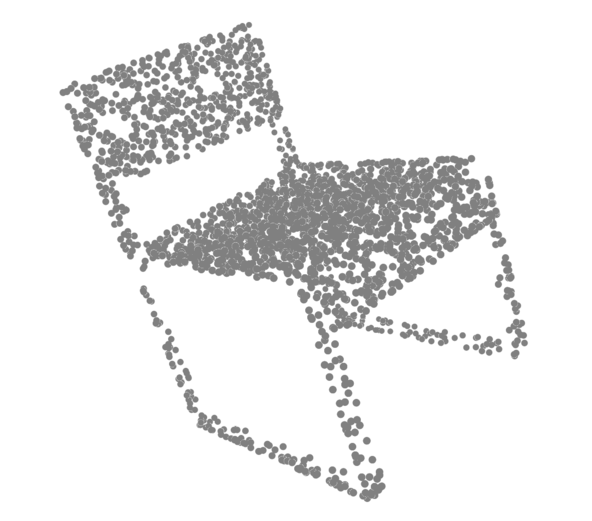}
 \includegraphics[width=0.3\linewidth]{./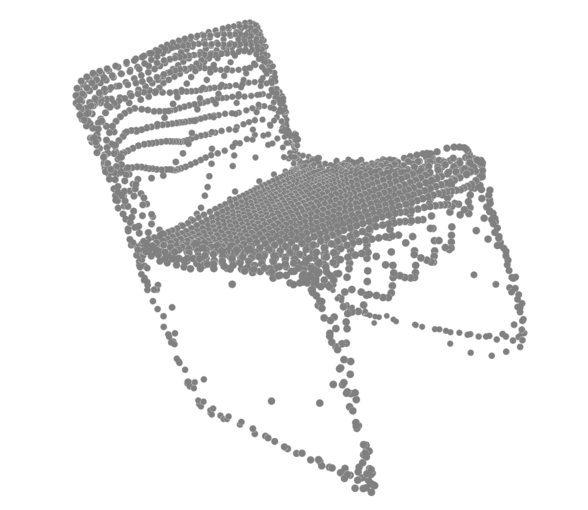}
 \includegraphics[width=0.3\linewidth]{./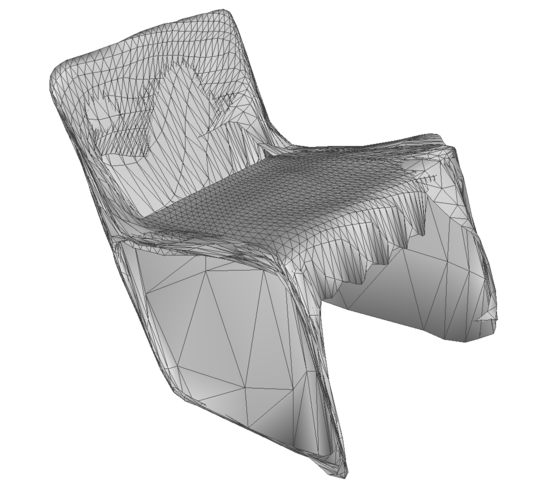}
\caption{ {\bf Excess of distortion. } Notice how, compared to the original point cloud (left), the generated pointcloud (middle) with 1 learned parameterization is valid, but the mapping from squares to surfaces enforces too much distortion leading to error when propagating the grid edges in 3D (right).}
\end{subfigure}
\\
\begin{subfigure}[b]{0.98\linewidth}
\centering
 \includegraphics[width=0.3\linewidth]{./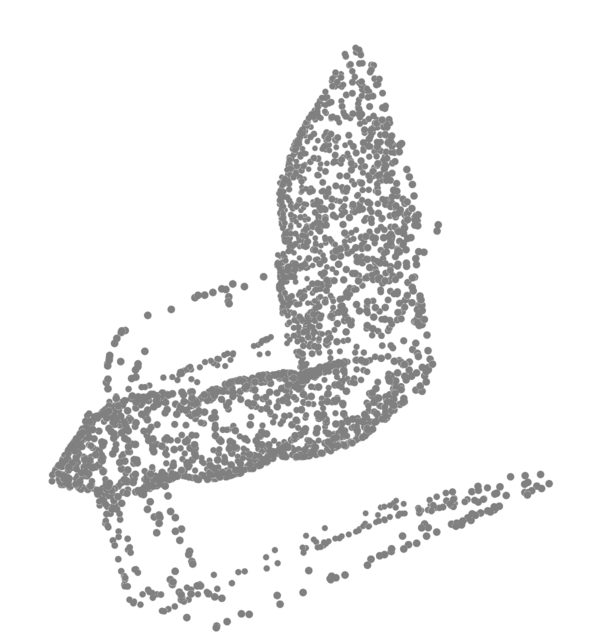}
 \includegraphics[width=0.3\linewidth]{./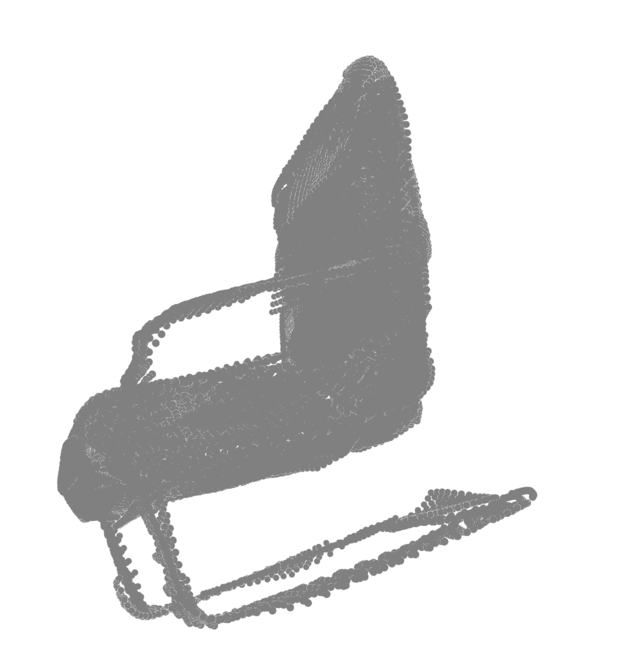}
 \includegraphics[width=0.3\linewidth]{./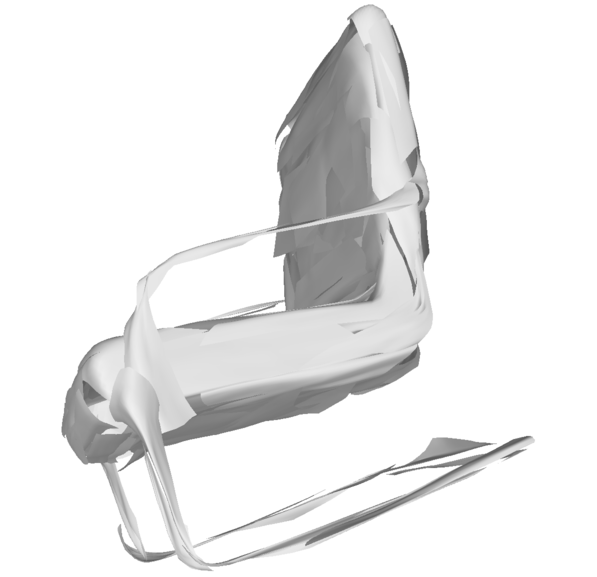}
\caption{{\bf Topological issues. } Notice how, compared to the original point cloud (left), the generated pointcloud (middle) with 125 learned parameterizations is valid, but the 125 generated surfaces overlap and are not stiched together (right).}
\end{subfigure}
\caption{
{\bf Limitations.} Two main artifacts are highlighted : (a) Excess of distortion when too small a number of learned parameterizations is used, and (b) growing errors in the topology of the reconstructed mesh as the number of learned parameterization increases.}
  \label{fig:limitation}
\end{figure*}

\begin{figure*}[t!]

\centering

\begin{subfigure}[b]{\linewidth}
 ~~~~~~\includegraphics[height=0.35\linewidth, width=0.19\linewidth]{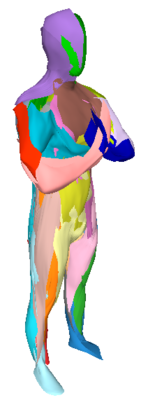}~~~~~~~~~~
  \includegraphics[height=0.35\linewidth, width=0.19\linewidth]{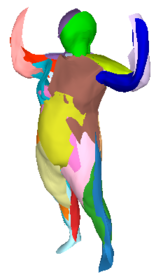}~~~~~~~~~~
 \includegraphics[height=0.35\linewidth, width=0.19\linewidth]{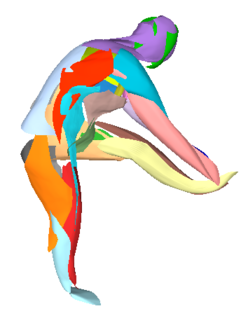}~~
\end{subfigure}

\caption{
{\bf Deformable shapes.} Our method learned on 250 shapes from the FAUST dataset to reconstructs a human in different poses. Each color represent one of the 25 parametrizations.}
  \label{fig:faust}
\end{figure*} 

\end{document}